\documentclass[lettersize,journal]{IEEEtran}
\usepackage{amsmath,amsfonts}
\usepackage{algorithmic}
\usepackage{amssymb}
\usepackage[dvipsnames,table]{xcolor}
\usepackage{algorithm}
\usepackage{listings}
\usepackage{array}
\usepackage[caption=false,font=normalsize,labelfont=sf,textfont=sf]{subfig}
\usepackage{textcomp}
\usepackage{stfloats}
\usepackage{url}
\usepackage{verbatim}
\usepackage{graphicx}
\usepackage{subcaption}
\usepackage{cite}
\usepackage{amsthm}
\usepackage{hyperref}
\usepackage{tabularray}
\usepackage{multirow}
\usepackage{caption}
\usepackage{color}
\usepackage[normalem]{ulem}
\usepackage{booktabs}
\usepackage{siunitx}
\usepackage{multirow}
\usepackage{xcolor}
\usepackage{colortbl}

\usepackage{xspace}
\usepackage{lipsum}
\newcommand{\latinphrase}[1]{\textit{#1}} 
\newcommand{\etal}{\latinphrase{et~al.}\xspace}
\newcommand{\ie}{\latinphrase{i.e.}\xspace}

\newtheorem{theorem}{Theorem}[section]

\begin{document}

\title{A Unified Non-Parametric and Interpretable Point Cloud Analysis via t-FCW Graph Representation}
\author{\IEEEauthorblockN{Anonymous Authors}}
\author{Haijian Lai, Bowen Liu, Man Xu,
Chan-Tong Lam,  \IEEEmembership{Senior Member, IEEE},
Jo\~ao Macedo,
Benjamin Ng,  \IEEEmembership{Senior Member, IEEE},
and Sio-Kei Im,  \IEEEmembership{Senior Member, IEEE}
\thanks{Haijian Lai; Bowen Liu; Man Xu; Chan-tong Lam and Benjamin K. Ng are with the Faculty of Applied Sciences, Macao Polytechnic University, Macao SAR, China  (e-mail: \{haijian.lai, bowen.liu, p2109382, ctlam, bng\}@mpu.edu.mo). Chan-tong Lam is the corresponding author.}
\thanks{Jo\~ao Macedo is with the University of Coimbra, CISUC/LASI, DEI. (e-mail: jmacedo@dei.uc.pt)}
\thanks{Sio-kei Im is with Macao Polytechnic University, Macao SAR, China. (e-mail: marcusim@mpu.edu.mo)}
\thanks{This work was supported in part by the Foundation for Science and Technology (FCT) under Grant UID/00326. This work  was supported in part by the Science and Technology Development Fund of Macao under Grant 0033/2023/RIA1}}

\markboth{Journal of \LaTeX\ Class Files,~Vol.~14, No.~8, August~2021}%
{Shell \MakeLowercase{\textit{et al.}}: A Sample Article Using IEEEtran.cls for IEEE Journals}

\maketitle
\begin{center}
\parbox{0.95\linewidth}{\footnotesize\centering
\textit{This is the author's accepted manuscript of an article accepted for publication in \textit{IEEE Transactions on Multimedia}, 2026. The final version of record will be available via IEEE Xplore.}\\[0.4em]
\textcopyright~2026 IEEE. Personal use of this material is permitted. Permission from IEEE must be obtained for all other uses, in any current or future media, including reprinting/republishing this material for advertising or promotional purposes, creating new collective works, for resale or redistribution to servers or lists, or reuse of any copyrighted component of this work in other works.
}
\end{center}
\vspace{0.5em}

\begin{abstract}
We introduce an empowered transposed Fully Connected Weighted (t-FCW) graph representation to embed point clouds into a metric space. While original t-FCW has shown promising results for point cloud classification, the reasons behind its effectiveness and its broader applicability remained unclear. In this work, we analyze the properties that make the empowered and original t-FCW effective and design a network that uses the empowered t-FCW exclusively as feature extractors. From an interpretability perspective, we build memory banks for classification, part segmentation, and semantic segmentation using the empowered t-FCW. Our analysis reveals that the empowered t-FCW inherits robustness from surface descriptors, provides interpretability through dimension-wise relations. These properties enable a highly efficient and interpretable network, which processes the ModelNet40 classification problem in approximately 7 seconds on an NVIDIA RTX A5000 GPU. Importantly, empowered t-FCW can function both as a lightweight standalone baseline and as a complementary plug-in to existing deep models.
Our code is available at \href{https://github.com/hawkinglai/et-FCW}{https://github.com/hawkinglai/et-FCW}.
\end{abstract}

\begin{IEEEkeywords}
Point cloud classification and segmentation, 3D machine learning, non-parametric framework.
\end{IEEEkeywords}

\section{Introduction} \label{sec:intro}
\IEEEPARstart{P}{oint} cloud classification and segmentation are foundational tasks \cite{sarker2024comprehensive,zhang2023deep,rani2024advancements} in 3D computer vision, important for research and multimedia applications such as robot sensing, scene sensing, etc. With the improvements in artificial intelligence, most current solutions are using deep learning to process 3D point cloud data. Deep learning solutions for point cloud processing offer several advantages. 
First, researchers do not require an extensive mathematical background for solution construction. 
For example, some deep learning frameworks \cite{FeyLenssen2019,Chaton2020TorchPoints3DAM} provide tools and effective methods that simplify the implementation. Moreover, other 3D deep learning implementations \cite{pytorchpointnet++,qian2022pointnext} provide code extensions for the original work or reproduce algorithms using more efficient techniques. Second, researchers can focus on the properties of data instead of designing features manually because deep learning models can learn or extract features from the raw data. Finally, neural networks are used to achieve state-of-the-art performance with training due to their ability to learn patterns and representations from large-scale training data.

Despite the above advantages, deep learning suffers from various disadvantages, including high computational requirements, black-box properties, overfitting, and being data-hungry. Consequently, non-parametric models \cite{zhang2023starting,zhu2024no} have emerged as a compelling research direction offering interpretability, though typically with lower performance than state-of-the-art neural networks. 
It is noticed that non-parametric models play a dual role: First, they serve as lightweight standalone encoders when interpretability, rapid deployment, or minimal computational resources are prioritized. Second, they serve as complementary auxiliary modules that enhance deep learning models by providing geometrically-grounded features without introducing 
trainable parameters. This duality points out that non-parametric methods serve not as replacements for deep learning, but rather as flexible methods that can work independently or synergistically depending on application requirements.
The pioneering works of non-parametric networks (e.g., Point-NN \cite{zhang2023starting} and Seg-NN \cite{zhu2024no}) for point cloud processing demonstrate both roles effectively. Their non-parametric versions prioritize 
interpretability and traceable feature extraction. Their parametric counterparts (i.e., Point-PN and Seg-PN) leverage the complementary features provided by Point-NN and Seg-NN to achieve better performance, prioritizing model performance. However, existing non-parametric methods encode point clouds into high-dimensional features, presenting a scalability challenge shared with many other approaches: Point-NN and Seg-NN require substantial Graphics Processing Units (GPU) memory as the point volume increases. To address this limitation, we propose the t-FCW framework with a significantly more compact feature representation while maintaining interpretability and the dual-role flexibility of non-parametric approaches.

Lai~\etal~\cite{lai2025point} proposed a transposed Fully Connected Weighted (t-FCW) graph representation for point cloud classification using Topological Data Analysis (TDA) to solve the problem of increasing computational complexity as point cloud volume grows. They discovered that features obtained from the transposed point cloud matrix and constructed as a fully connected weighted graph that can capture effective geometric information despite their compact size. In other words, the size of the t-FCW graph representation remains constant while the point cloud volume changes. Although the issue of computational complexity as the point volume grows can be addressed in classification, their work overlooked the features of every point in a point cloud, which poses a challenge when extending t-FCW to segmentation tasks due to the loss of point information.

In this paper, we improve the design of the t-FCW graph representation and extend it to part segmentation and semantic segmentation. It consists of an empowered t-FCW hierarchical framework for embedding the point cloud surface descriptors (PCSDs). 
Meanwhile, to ensure interpretability in the prediction, we adopt a similarity-based prediction model that computes pairwise similarity scores between entries in the memory banks constructed from the training and test sets ~\cite{zhang2023starting,zhu2024no,lai2025point}, which demonstrate the state-of-the-art performance in non-parametric point cloud analysis.
Furthermore, we hypothesize and verify that the effectiveness of t-FCW largely stems from the properties of the underlying PCSDs. Through systematic design and evaluation, we show that t-FCW inherits these properties in order to provide an efficient and robust representation. In this way, the t-FCW framework translates point clouds into compact descriptor-based graphs while preserving interpretability for point cloud analysis.
It is important to emphasize that t-FCW is not proposed as a universal substitute for parametric point cloud networks. Our contribution is to introduce an efficient, compact, and interpretable non-parametric representation that complements existing methods. The empowered t-FCW provides scalability to large point clouds and interpretability, making it suitable both as a lightweight standalone baseline and as a plug-in module within broader pipelines.
Our main contributions are:
\begin{enumerate}
    \item We propose an efficient, compact, and interpretable non-parametric framework for point cloud analysis based on the empowered t-FCW graph representation.
    \item We analyze the representation generated by the proposed framework that is based on the empowered t-FCW, demonstrating that its properties are inherited from the Gram matrix and various PCSDs from prior research.
    \item We conduct experiments on classification, part segmentation, and semantic segmentation benchmarks, and compare against existing non-parametric approaches. Results highlight t-FCW’s efficiency and scalability to large point clouds, as well as its complementary role integrated into deep learning models.
\end{enumerate}
This paper is organized as follows. Section \ref{sec:relatedwks} reviews related work and motivates the proposed t-FCW framework. Section \ref{sec:approach} details the framework’s architecture, while Section \ref{sec:exp} describes the experimental setup and reports results on multiple benchmarks, and Section \ref{sec:conclu} concludes the paper. 

\section{Related Works \& Preliminaries}\label{sec:relatedwks}
\subsection{Point Cloud Surface Descriptors}
PCSDs are used to describe the shapes of a point cloud. Generally, point-based models like PointNet series \cite{qi2017pointnet,qi2017pointnet++,ma2022rethinking,qian2022pointnext} take the 3D Cartesian coordinates as input to the networks. It is insufficient to capture geometric information due to a lack of relationships or interactions between points in a point cloud, although deep neural networks can extract the representation from the raw data. Therefore, additional geometric information is needed for the Cartesian coordinates as the representation of point clouds. 
Qiu~\etal~\cite{qiu2022geometric} propose a geometric point cloud descriptor that represents explicit geometric relations (\ie~normal vector, edge, and length information) in low-level space for better high-level representation while training neural networks. Ran~\etal~\cite{ran2022surface} focus on the local geometry in point clouds that are described as triangular and umbrella orientation, inspired by the idea that a local curve can be represented by derivatives in the Taylor Series.
While the above PCSDs have achieved notable success in extracting discriminative features from point clouds, they generally lack robustness against perturbations like rotation. Rotation-invariant PCSDs should also be focused on for more real-world applications in the future, where objects may appear under arbitrary orientations. Zhang~\etal~\cite{zhang2024risurconv} propose a well-preserved rotation-invariant surface descriptor, which constructs a triangle surface for each point and its neighbors in a point cloud. The surface descriptors are used to capture richer geometric context from raw point clouds by describing local surface properties, enabling more robust, invariant, and effective feature extraction in point cloud analysis.
In contrast, other solutions, which transform a point cloud into another representation, enable existing techniques to process the point cloud indirectly, such as voxel \cite{voxnet2015ma}, graph \cite{wang2019dynamic}, multi-view images \cite{Su_2015_ICCV}, and so on. 
However, these transformations \cite{voxnet2015ma,Su_2015_ICCV,ran2022surface} are criticized for the following reasons: complex processing, high computational cost, and information loss.

\subsection{Non-parametric Models}
Non-parametric techniques have been developed to address the issues wherein parametric techniques typically assume known values including priors and relationships in the data. In this scenario, the assumed priors \cite{goyal2017nonparametric} often lose the rich semantics representation, which means that they cannot capture the intrinsic patterns and relationships of the data. Similarly, classical stochastic optimization methods, for example, Lipschitz functions \cite{malherbe2017global}, require extensive trial and error to search optimal parameters. To tackle the above issues, non-parametric techniques make fewer or no assumptions compared to parametric techniques, which do not rely on given priors or parameter collections.
Different from the above definition, Zhang~\etal~\cite{zhang2023starting} and Zhu~\etal~\cite{zhu2024no} presented Point-NN and Seg-NN, respectively without trainable parameters for point cloud analysis using trigonometric functions. They are inspired by Tancik~\etal~\cite{tancik2020fourier}, who discussed that high-frequency content can be effectively learned by Multi-Layer Perceptrons (MLPs) after sinusoidal mapping in low-dimensional data. Point-NN and Seg-NN introduced an embedding algorithm that refers to positional encoding \cite{vaswani2017attention}, which is used to encode the point cloud as a type of `signal'. Point-NN can capture high-frequency 3D structures, including edges and other geometric details. In contrast, Seg-NN focuses on analyzing point clouds in scenes that contain much high-frequency noise. Therefore, it is necessary to filter out the frequencies to refine the scene representation. 
Lai~\etal~\cite{lai2025point} observed that computational complexity increases with point cloud volume growth and proposed Point-FCW, a TDA framework that shows promising performance but lacks an explanation of why the t-FCW graph representation is effective.
In contrast, Gou~\etal~\cite{gou20253d} observed the limited interpretability of the above models that aggregate local point cloud features, and proposed the ICP-Classifier which is a neural-network-free and training-free pipeline that leverages Iterative Closest Point registration and $k$-NN for transparent and robust shape classification.

\subsection{Interpretable Models}
Interpretable models are defined as those whose internal mechanisms, such as the mapping from inputs to outputs, are sufficiently transparent to allow human to comprehend their decision-making processes. 
This interpretability facilitates the identification and mitigation of issues by enabling systematic inspection and validation of the model's logic.
Esser~\etal~\cite{esser2024non} introduces a kernel-based framework that enhances interpretability in representation learning by grounding its mechanisms in transparent mathematical structures. Unlike the black-box deep neural networks, this approach leverages kernels to define similarity between data points by contrasting learning, enabling users to inspect kernel matrices and understand how representations are formed. Feng~\etal~\cite{feng2024interpretable3d} present a prototype-based classification framework that makes the decision-making process of 3D deep learning models transparent and comprehensible to users. 
The model classifies point clouds by comparing them to interpretable prototypes for predictions in clear, inspectable examples, which allows users to trace how new inputs align with known patterns via visual and statistical similarity scores.

\begin{figure*}
    \centering
    \includegraphics[width=0.95\linewidth]{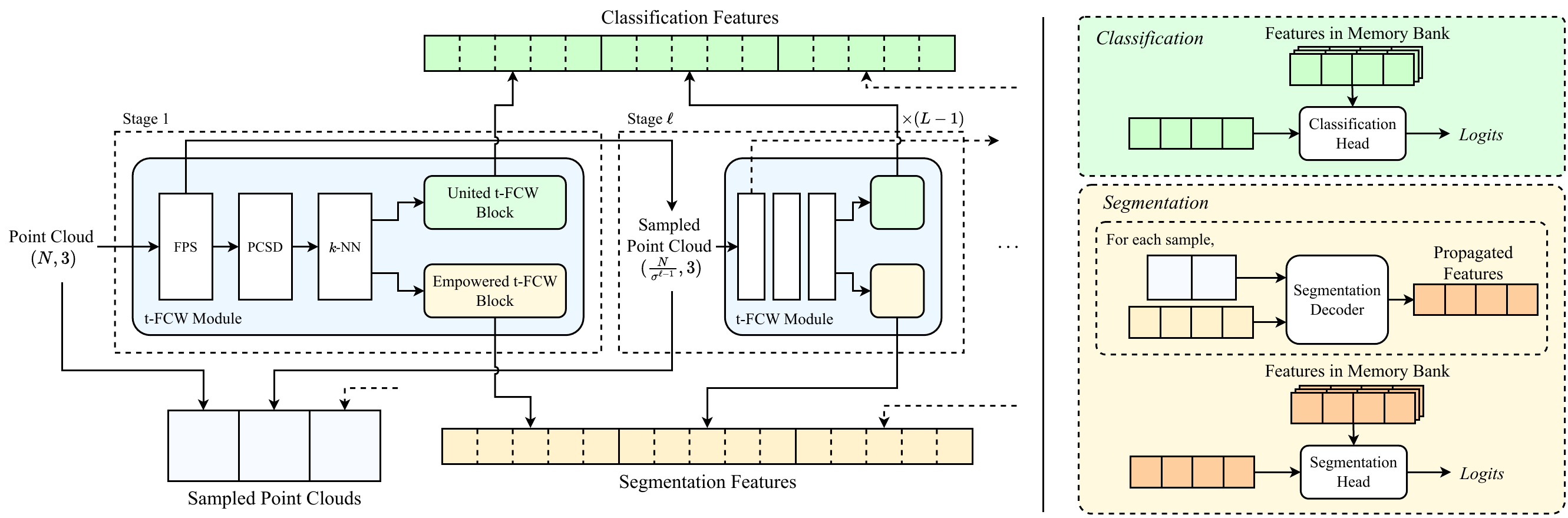}
    \caption{Feature extraction of the t-FCW networks.}
    \label{fig:fetfcwn}
\end{figure*}

\subsection{Preliminaries}
Processing point cloud with TDA \cite{tauzin2021giotto} is able to be considered as a specific case of fully-connected-weighted (FCW) graphs.
The procedure in the Vietoris–Rips filtration on a point cloud is often described as growing spheres with increasing radius around each point. First, in the complex-building step, given a metric space $(X, d)$ and a threshold $\epsilon \geq 0$, the Vietoris-Rips complex denoted by $VR_{\epsilon}(X)$ and it contains all subsets as complex if all subsets' pairwise distance less than or equal to the threshold $\epsilon$, explicitly:
\begin{equation}
    VR_{\epsilon}(X) = \Big\{\{x_{0}, ..., x_{n}\} \in X \; \vert \; \text{dist}(x_{i},x_{j} \leq \epsilon) \; \forall i,j \Big\}.
\end{equation}
Finally, in the filtration step, given a simplicial complex $X$, a filtration is a nested subcomplex of $X$ obtained through a monotonic function. Vietoris-Rips filtration arranges $VR_{\epsilon}(X)$ as a finite increasing sequence:
\begin{equation}
    \emptyset = VR_{\epsilon_0}(X) \subseteq VR_{\epsilon_1}(X) \subseteq \dots \subseteq VR_{\epsilon}(X) \subseteq X,
\end{equation}
where $0 < \epsilon_0 < \epsilon_1 < \dots, < \epsilon$.

According to the analysis in Point-FCW \cite{lai2025point}, although its graph construction strategy appears conceptually well-motivated, the method still achieves relatively low overall accuracy on point cloud classification tasks. Moreover, its performance is not competitive with state-of-the-art models and, in some cases, even falls short of baseline methods. 
Motivated by this, the t-FCW graph representation was introduced by transposing the point cloud matrix and constructing fully connected weighted graphs along the feature dimension rather than across points. This yields a compact and computationally efficient representation that has shown promise for classification. However, several limitations remain: (1) Existing t-FCW formulations are restricted to global classification and do not generalize to point-wise segmentation tasks, since point-level information is lost during feature aggregation. (2) While effective in practice, the reasons behind the strong performance of such a compact representation have not been fully analyzed.

\section{Proposed Framework}\label{sec:approach}

\subsection{Overview}\label{sec:framework}

The proposed framework is composed of two primary stages: feature extraction, which encodes geometric and structural information from the point cloud, and optimized prediction, which performs classification or segmentation efficiently.

\subsubsection{Feature Extraction}
We design a non-parametric and interpretable framework that contains various networks for different tasks, as shown in Fig. \ref{fig:fetfcwn}, inspired by recent works such as Point-MLP~\cite{ma2022rethinking}, Point-NN~\cite{zhang2023starting}, and Seg-NN~\cite{zhu2024no}. 
The network processes input point clouds through a sequence of sampling stages, each of which halves the point cloud volume. A module formats PCSDs from the point cloud, and the sampled PCSDs then obtain $K$ nearest neighbors from the unsampled PCSDs.
Specifically, the sampling procedure follows the workflow described in Section \ref{sec:sampling}, utilizing FPS (Furthest Point Sampling \cite{pytorchpointnet++}) followed by $k$-NN grouping to establish local neighborhoods. For each sampled PCSD in classification, the network constructs two complementary feature representations by using the United t-FCW Block as shown in Fig. \ref{fig:tfcw_block}: (1) a global t-FCW obtained by applying Algorithm \ref{alg:tfcw}, and (2) a local empowered t-FCW as shown in Algorithm \ref{alg:etfcw}, that preserves point-wise geometric relationships by applying Algorithm \ref{alg:etfcw}. The global t-FCW and local empowered t-FCW are aggregated using hybrid pooling that combines max and average pooling.
From the perspective of segmentation, the global t-FCW~\cite{lai2025point} is limited by its lack of point-wise spatial information. To address this, it is replaced with a neighborhood-based PCSD representation, where $K$ nearest neighbors are pooled to align the feature dimension $N$. Hence, as shown in Fig. \ref{fig:tfcw_block}, the framework utilizes the Empowered t-FCW Block for the segmentation network, which contains a pooled PCSD aligned with the $N$ dimension and a local empowered t-FCW embedding. These two features are concatenated to form a feature that preserves point-wise spatial information.
\begin{figure}[ht]
    \centering
    \includegraphics[width=0.85\linewidth]{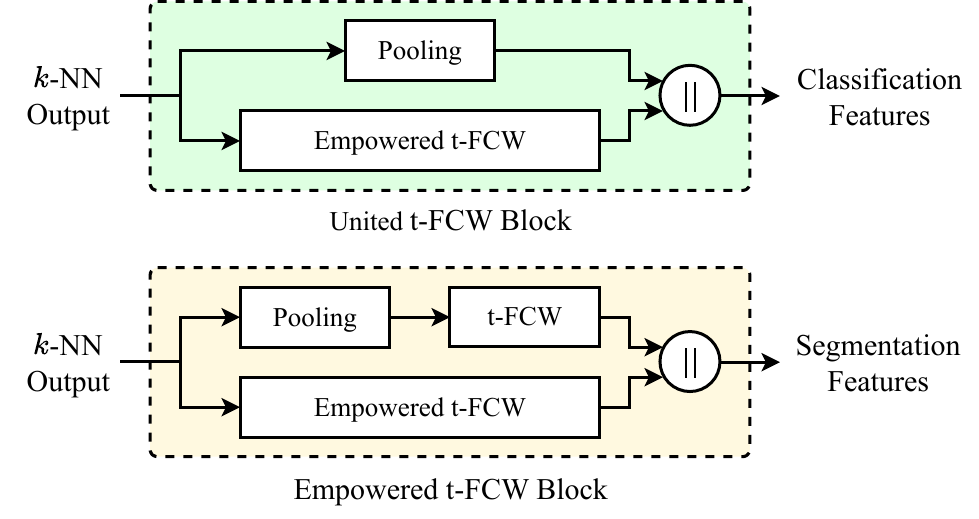}
    \caption{Structures of united and empowered t-FCW blocks}
    \label{fig:tfcw_block}
\end{figure}

Finally, the features at every stage on both networks enable the capture of both global geometric information and fine-grained local geometric details. 
For the classification networks, the features are appended into a list along the batch dimension to form a memory bank, denoted by $M$. For segmentation, the features are passed through a hierarchical decoder that performs feature propagation via inverse-distance weighted $k$-NN interpolation. At each stage, interpolated features are concatenated with skip-connected encoder features to recover dense per-point representations. The recovered per-point features are then collected to form the memory bank $M$.

\subsubsection{Prediction}
For the prediction, we adopt the prediction modules from Point-NN \cite{zhang2023starting} and Seg-NN \cite{zhu2024no}, as their classifiers currently represent interpretability and the state-of-the-art performance in non-parametric point cloud analysis. In all experiments, predictions are made using the similarity-based memory banks. 
Specifically, we construct a memory bank from the training set, $M_{\text{train}}$, and we use the same framework to obtain features from every test sample, denoted by $M_{\text{test}}$. 
Following the standard formulation of memory-based prediction \cite{zhang2021tip, zhang2023starting, zhu2024no} and its intuitive reformulation \cite{lai2025point}, the prediction logits are computed as
\begin{equation}
    \begin{aligned}\label{eq:tfeq}
        Logits &= a(S)L_{\text{one-hot}} \\
        &= a(M_{test}M_{train}^T)L_{\text{one-hot}},
    \end{aligned}
\end{equation}
where $a(S)=\exp(-\gamma (1 - S))$ is an activation function that emphasizes similarity scores $S$, and $\gamma$ is a tunable parameter determined through empirical validations \cite{zhang2023starting}. Meanwhile, the $L_{\text{one-hot}}$ denotes the one-hot encoded labels of the training set. Lai~\etal\cite{lai2025point} visualize the prediction modules, where the diagonal elements of $M_{test}M_{train}^T$ capture self-similarity scores of correct samples exhibiting maximal similarity, providing an interpretable measure of retrieval accuracy.

\begin{figure*}[ht]
    \centering
    \includegraphics[width=0.8\linewidth]{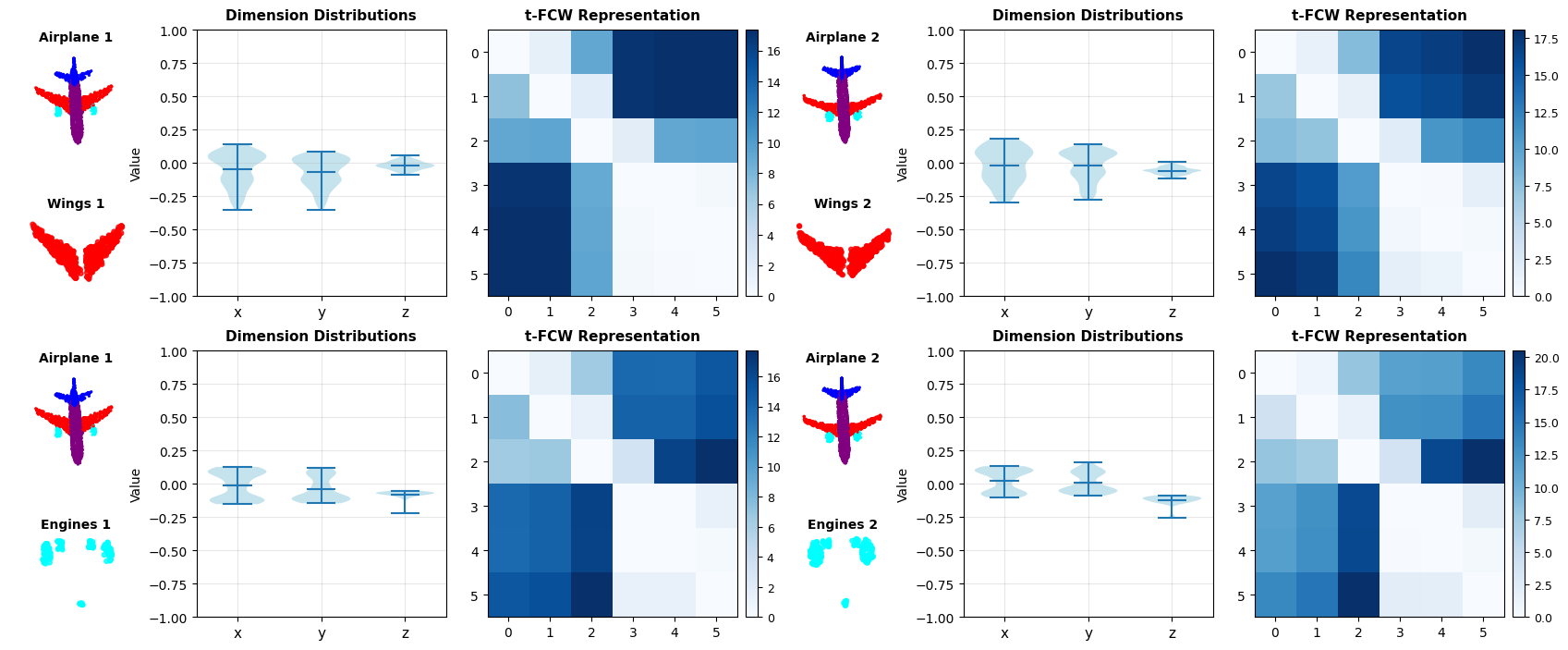}
    \caption{Wings and engines of airplanes with their dimension distributions and t-FCW representations, where the cosine similarity of wings 1\&2 is 99.0\%, engines 1\&2 is 99.4\%, and between engine-wing pairs 1\&2 is 93.0\% and 88.1\%.}
    \label{fig:objectsfeaturedis}
\end{figure*}

\subsection{t-FCW Graph Representation}
As illustrated in Section \ref{sec:framework} and Fig. \ref{fig:fetfcwn}, the t-FCW serves as the encoder within our proposed framework. In this section, we analyze the interpretability of t-FCW to explain its effectiveness in extracting discriminative yet interpretable features for classification or segmentation. We then discuss the theoretical properties of t-FCW, which is necessary to integrate t-FCW into the proposed framework or other existing methods.
\subsubsection{Interpretability of t-FCW}\label{sec:interpretability}
The t-FCW graph can be interpreted as a special dimension-dimension Gram matrix in a vectorized way. Specifically, it measures the pairwise correlation between feature dimensions in a point cloud rather than between points, thereby providing insights into how different geometric or surface descriptors interact.
Given a transposed point cloud $X = [v_1 \; v_2 \; \cdots \; v_d] \in \mathbb{R}^{d \times n}$, where $v_i \in \mathbb{R}^n$ represents the $i$-th dimension (e.g., $x, y, z$) across all $n$ points. The pairwise distance between two dimensions $i$ and $j$, denoted as $\text{t-FCW}(i,j)$, is defined as
\begin{equation}
    \begin{aligned}\label{eq:tfcwpd}
        \text{t-FCW($i,j$)} &= \sqrt{||v_i-v_j||^2} \\
        &= \sqrt{||v_i||^2 + ||v_j||^2 - 2 \langle v_i, v_j \rangle}.
    \end{aligned}
\end{equation}

Then, we can calculate the Gram matrix $G=XX^T  \in \mathbb{R}^{d \times d}$ with $G_{i,j}=\langle v_i,v_j \rangle$ and its diagonal elements vector $\text{diag}(G)$. Given that $1_n \in \mathbb{R}^n$ is a vector of ones, such that
\begin{equation}\label{eq:grammatrix1n}
    \text{diag(G)} = \begin{pmatrix}
        \langle v_1,v_1\rangle\\
        \langle v_2,v_2\rangle\\
        \vdots \\
        \langle v_d,v_d\rangle\\    
    \end{pmatrix}, 
    1_n = \begin{pmatrix}
        1\\
        1\\
        \vdots \\
        1\\    
    \end{pmatrix}.
\end{equation}

Based on the Equation (\ref{eq:tfcwpd}) and Equation (\ref{eq:grammatrix1n}), the t-FCW for all dimension pairs can be written as
\begin{equation}
    \text{t-FCW} = \sqrt{\text{diag}(G)1^{T}_{n}+1_{n}\text{diag}(G)^{T}-2G}.
    \label{eq:tfcwG}
\end{equation}

For the Gram matrix $G = XX^T \in \mathbb{R}^{d \times d}$, it characterizes the self-correlation across feature dimensions, making it a meaningful representation for distinguishing between different semantic objects. This concept has been supported in prior work: Seg-PN (parametric version of Seg-NN \cite{zhu2024no}) demonstrates that such dimension-wise correlations can encode object-level semantics effectively. Similarly, Point-FCW \cite{lai2025point} also confirms that feature channels from PCSD are not independent and exhibit dimensional correlations.
In addition, other studies have leveraged the Gram matrix as a self-correlation representation in similar contexts \cite{jing2019neural, lang2022learning}. 

The interpretability of t-FCW follows from its inter-dimensional correlations encoded through the Gram matrix, and each vector in the Gram matrix corresponds to the distribution of all points along a geometric descriptor dimension (e.g., xyz, normal vectors, etc). Semantic parts with consistent geometric structure exhibit stable 1D distributions across instances: airplane wings have a wingspan with $x$-spread, a chord length with $y$-spread, and Airfoil thickness judged by $z$-spread, while engines show compact, roughly uniform $xyz$-spread because of their cylindrical geometry. These characteristic distributions arise from stable inter-dimensional correlations $G_{i,j} = \langle v_i, v_j\rangle$, which t-FCW transforms into interpretable pairwise distances.

Fig. \ref{fig:objectsfeaturedis} validates this intuition across four airplane instances. Wings and engines exhibit consistent dimension distribution patterns across instances, producing similar t-FCW representations with 99.0\% and 99.4\% cosine similarity, respectively. Crucially, the comparisons of wing–engine pairs yield much lower similarities (93.0\% and 88.1\%), even though these parts are spatially adjacent on the object. This indicates that t-FCW encodes semantic geometry rather than spatial proximity, which incorrectly treats nearby but semantically unrelated parts as similar.
The resulting t-FCW thus provides a geometric representation that measures how descriptor dimension varies across a part, so it has clear part-level semantic meaning, bridging mathematical formulation with interpretability.

\subsubsection{Properties of t-FCW}
As shown in Equation (\ref{eq:tfcwG}), the t-FCW can be derived from the Gram matrix, so it inherits the properties from the Gram matrix. For the generalization in point cloud analysis, we need to consider two basic properties:
\begin{itemize}
    \item \textbf{Permutation invariance}: the order of a point cloud is random and should not affect the extracted features. 
    \item \textbf{Rotation invariance}: the shape remains consistent under rotations, hence the orientation of a point cloud in 3D space cannot affect the extracted features.
\end{itemize}
However, it is noted that these properties exist strictly within the scope of the t-FCW construction and may not obtain the same properties if it is extended to other representations beyond this formulation. The following two theorems show that while t-FCW naturally achieves permutation invariance, it does not preserve rotation invariance.

\begin{theorem}[Permutation Invariance of Gram matrix]
\label{pythagorean}
Given a transposed point cloud \(X \in \mathbb{R}^{d \times n}\), applying a permutation $P$ to the input point cloud does not alter the Gram matrix output. Mathematically, for a point cloud X and a permutation matrix $P$, we aim to show:

\begin{equation}
    G(XP) = G(X).
\end{equation}
\end{theorem}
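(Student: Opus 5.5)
The plan is to compute the Gram matrix of the permuted point cloud directly and use orthogonality of permutation matrices. Since $X \in \mathbb{R}^{d \times n}$ is the transposed point cloud, its columns index points. Reordering the points is therefore right-multiplication by an $n \times n$ permutation matrix $P$. The permuted cloud is $XP \in \mathbb{R}^{d\times n}$, and by the definition $G = XX^T$ used in Equation (\ref{eq:tfcwG}), we have $G(XP) = (XP)(XP)^T$.

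The key steps, in order, are as follows. First, expand the transpose: $(XP)^T = P^T X^T$, which gives $G(XP) = X P P^T X^T$. Second, show that $PP^T = I_n$. A permutation matrix has exactly one entry equal to $1$ in each row and each column, with zeros elsewhere. Hence $(PP^T)_{k\ell} = \sum_m P_{km}P_{\ell m}$ equals $1$ when $k=\ell$ and $0$ otherwise, because distinct rows have their single $1$ in distinct columns. Third, substitute to conclude $G(XP) = X I_n X^T = XX^T = G(X)$.

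As a sanity check, I will also give the entrywise view. $G_{i,j} = \langle v_i, v_j\rangle = \sum_{k=1}^n (v_i)_k (v_j)_k$, and applying $P$ permutes the coordinates of every $v_i$ by the same bijection $\sigma$. This only reorders the terms of a finite sum, so each inner product is unchanged. This formulation also makes clear that $\mathrm{diag}(G)$ is invariant.

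As a corollary, I will note that the t-FCW matrix in Equation (\ref{eq:tfcwG}) depends on $X$ only through $G$ and $\mathrm{diag}(G)$. It is therefore permutation invariant as well.

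There is no real obstacle here. The only point requiring care is convention: the permutation must act on the point index, which is the column side of the transposed matrix $X$. If one wrote $PX$ instead, that would permute descriptor dimensions and change $G$ to $PGP^T$. I will state the convention explicitly before the computation.
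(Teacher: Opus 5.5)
Your proof is correct and follows the same route as the paper: expand $(XP)(XP)^T = XPP^TX^T$ and use $PP^T = I_n$ to obtain $XX^T = G(X)$. The extra material you add is sound but does not change the argument: the justification of $PP^T = I_n$, the entrywise check, the t-FCW corollary, and the remark that $P$ must act on the column (point) index.
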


\begin{proof}
For a point cloud \(X \in \mathbb{R}^{d \times n}\) and its corresponding Gram matrix, a permutation matrix $P$ is applied to $X$:
\begin{equation}
    G(XP) = (XP)(XP)^T = XPP^TX^T = XX^T = G(X).
\end{equation}
Therefore, the Gram matrix is permutation invariant.
\end{proof}

\begin{theorem}[Rotation Variance of Gram matrix]\label{theorem:rvgm}
Let \(X_c \in \mathbb{R}^{3 \times n}\) be a transposed point cloud in a 3D Cartesian coordinate system, excluding cases of 3D objects with rotational symmetry (e.g., sphere). For a generic rotation \(R \in SO(3)\) with \(R \neq I\), where $SO(3)$ is the special orthogonal group for rotations in 3D space that preserve orientation, applying \(R\) to \(X_c\) generally alters the Gram matrix. Mathematically, for the point cloud $X_c$ and a rotation matrix $R$, we aim to show:

\begin{equation}
    G(RX_c) \neq G(X_c).
\end{equation}
\end{theorem}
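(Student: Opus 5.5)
We compute directly and reduce the claim to a commutation condition. For $R \in SO(3)$ the Gram matrix of the rotated cloud is
\begin{equation}
G(RX_c) = (RX_c)(RX_c)^T = R X_c X_c^T R^T = R\,G(X_c)\,R^T .
\end{equation}
So $G(RX_c) = G(X_c)$ holds iff $R G = G R$, where $G = G(X_c)$ (using $R^T = R^{-1}$). This is the opposite of what happened in Theorem~\ref{pythagorean}: there $PP^T = I$ cancelled inside the product, whereas here $R$ acts on the outside and only cancels if it commutes with $G$. It therefore suffices to show that, outside the excluded symmetric cases, a generic $R \neq I$ does not commute with $G$.

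Next we spectrally decompose $G$. It is symmetric positive semidefinite, so write $G = Q\Lambda Q^T$ with $\Lambda = \operatorname{diag}(\lambda_1,\lambda_2,\lambda_3)$. We read the hypothesis ``excluding rotationally symmetric objects'' as the assumption that the $\lambda_i$ are pairwise distinct. If two eigenvalues coincide, $G$ is invariant under rotations about the remaining axis, as for a sphere, where $G \propto I$ and every $R$ commutes with it. This reading should be stated explicitly, since it is the precise content of the vague exclusion.

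With distinct eigenvalues, a matrix commutes with $G$ iff it preserves each eigenline. A rotation $R$ preserving three orthogonal lines is diagonal in the basis $Q$ with entries $\pm1$ and determinant $1$. Such an $R$ is either $I$ or one of the three $\pi$-rotations about an eigenaxis. The commutant of $G$ in $SO(3)$ is thus the finite group $\{I\}\cup\{\text{three half-turns}\}$, isomorphic to the Klein four-group. Since $SO(3)$ is a connected $3$-dimensional manifold, this finite set has measure zero, so a generic $R \neq I$ gives $RGR^T \neq G$, which is the claimed $G(RX_c) \neq G(X_c)$.

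The main obstacle is not the algebra but making ``generic'' and ``rotational symmetry'' precise. The half-turns about principal axes are genuine non-identity rotations that leave $G$ unchanged even for asymmetric clouds. The statement therefore cannot hold for every $R \neq I$, and we will phrase the conclusion as holding for all $R$ outside a measure-zero (finite) subset of $SO(3)$. To make the argument concrete, we will also give an explicit witness: for distinct eigenvalues, a rotation by a small angle $\theta \notin \pi\mathbb{Z}$ about the axis $q_3$ mixes $q_1$ and $q_2$ and changes the $(1,1)$ entry of $Q^T R G R^T Q$ from $\lambda_1$ to
\begin{equation}
\lambda_1\cos^2\theta + \lambda_2\sin^2\theta \neq \lambda_1 .
\end{equation}
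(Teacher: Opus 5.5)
Your proof is correct, and it opens exactly as the paper's does, with $G(RX_c) = R\,G(X_c)\,R^T$. The paper stops there: it simply asserts that $R\,G(X_c)\,R^T = G(X_c)$ ``generally does not hold,'' and leaves both ``generic'' and the symmetry exclusion undefined. You prove that step. You recast invariance as $RG = GR$ and compute the stabilizer of $G$ in $SO(3)$ from its spectral decomposition. This gives you a precise hypothesis and an exact exceptional set: the identity and the three half-turns about the principal axes, a finite and hence Haar-null set. It also gives an explicit witness via $\lambda_1\cos^2\theta + \lambda_2\sin^2\theta$. Finally, it exposes something the paper's phrasing hides: those half-turns fix $G$ even for an asymmetric cloud, so ``generic'' cannot be strengthened to ``every $R \neq I$.'' The paper's version conveys the intuition but does not actually prove the key step.

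Two small refinements. First, distinct eigenvalues is sufficient but stronger than necessary. If $\lambda_1 = \lambda_2 \neq \lambda_3$, the stabilizer is a one-dimensional copy of $O(2)$: rotations about $q_3$ together with half-turns about axes in the plane of $q_1, q_2$. That is still null in $SO(3)$, so the generic conclusion holds whenever $G$ is not a scalar multiple of $I$. Second, this spectral condition is not literally the same as the object lacking rotational symmetry, because $G$ is an uncentred second-moment matrix. Whitening a generic cloud $Y$ to $(YY^T)^{-1/2}Y$ produces an asymmetric cloud with $G = I$, which every rotation fixes. So present the spectral condition as your precise replacement for the paper's vague exclusion, not as its exact content.
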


\begin{proof}
For a generic point cloud \(X_c \in \mathbb{R}^{3 \times n}\) and its corresponding Gram matrix, a generic rotation $R$ is applied to $X_c$:
\begin{equation}
    G(RX_c) = (RX_c)(RX_c)^T = RX_cX_c^TR^T = RG(X_c)R^T.
\end{equation}
As $RG(X_c)R^T = G(X_c) $ generally does not hold for $X_c$ and $R$, we conclude that, for a generic rotation $R \neq I$, the Gram matrix typically changes: $G(RX_c) \neq G(X_c)$.
\end{proof}

As a result, the same object under different orientations causes different t-FCW representations, thereby affecting recognition or classification. 
This necessarily motivates us to introduce a PCSD that can decouple orientation-dependent components from intrinsic shape information for rotation invariance in point cloud processing.

\subsection{Components}\label{sec:methods}
\subsubsection{PCSDs Formatting}
Given a point cloud $P \in \mathbb{R}^{N\times 3}$ that exists in a 3D space with $N$ points, where the collection of points has the relation in $P = \{p_1, p_2, \dots, p_N\}$ to stand for a shape. 
For any $p_i \in P$, it requires at least a coordinate system, e.g., it can be represented as $(x_i, y_i, z_i)$ by using the Cartesian coordinate system. Additionally, the point cloud contains not only the coordinate system but also other supplementary information (color, normal vector, etc.). 
Moreover, Qiu~\etal~\cite{qiu2022geometric} take additional geometric information to provide more expressive information about the explicit geometric information in low-level space for making full use of the triangle features. Given a point $p_{i0} \in P$ and two points ($x_{i1}$, $x_{i2}$) which are the nearest neighbors of $p_{i0}$, their geometric point descriptor $\widetilde{p_{i0}}$ (denoted as \textit{GeoPCSD}) is:

\begin{equation}
    \begin{aligned}\label{eq:gbp_pcsd}
    \overrightarrow{V_{01}} &= x_{i1} - p_{i0} \in \mathbb{R}^{3},\\
    \overrightarrow{V_{02}} &= x_{i2} - p_{i0} \in \mathbb{R}^{3},\\
    \widetilde{p_{i0}} &= \left[p_{i0}, \overrightarrow{V_{01}} \times \overrightarrow{V_{02}}, \overrightarrow{V_{01}}, \overrightarrow{V_{02}}, 
    \Vert \overrightarrow{V_{01}} \Vert, \Vert \overrightarrow{V_{02}} \Vert \right] \in \mathbb{R}^{14} ,
    \end{aligned}
\end{equation}
where $\widetilde{p_{i0}}$ denotes the extended representation of $p_i$, which is 3D coordinate information with normal vector $\overrightarrow{V_{01}} \times \overrightarrow{V_{02}}$ and estimated edges (e.g., $\overrightarrow{V_{01}}$). Vector-based information is invariant to translation since it relies on relative differences between points, so the additional geometric information can provide robustness to real-world data challenges \cite{qiu2022geometric}.

From Theorem~\ref{theorem:rvgm}, the Gram matrix is not invariant under generic rotations. Therefore, it is necessary to identify a PCSD that remains consistent regardless of the rotation applied to the point cloud for robustness. 
Zhang~\etal~\cite{zhang2024risurconv} propose \textit{RISP}, which is a rotation-invariant surface descriptor by constructing triangles. RISP carries explicit geometric meaning, which is crucial for explaining the downstream task, whereas the learned rotation-invariant features lack this clarity.
It formats two local triangle surfaces around each neighbor of a reference point using its adjacent neighbors.
First, they aim to construct a relationship between the two triangle surfaces by angles. Given a point $p_{i} \in P$ and three neighboring points ($x_{i-1}$, $x_{i}$, and $x_{i+1}$) from $p_i$, we have five angles $\phi$ that characterize the two triangles and their surface relationship with respect to the edge $x_ip_i$ in Euclidean space:
\begin{equation}
    \begin{aligned}
        &\phi_1= \angle (\overrightarrow{x_{i-1}p_i}, \overrightarrow{x_ip_i}), \phi_2= \angle (\overrightarrow{x_{i+1}p_i}, \overrightarrow{x_ip_i}), \\
        &\phi_3= \angle (\overrightarrow{x_{i-1}x_i}, \overrightarrow{x_{i-1}p_i}), \phi_4= \angle (\overrightarrow{x_{i+1}p_i}, \overrightarrow{x_{i+1}x_i}), \\
        &\phi_5= \angle (\overrightarrow{x_{i+1}p_i} \times \overrightarrow{x_ip_i}, \overrightarrow{x_{i-1}p_i} \times \overrightarrow{x_ip_i}).
    \end{aligned}
\end{equation}

Secondly, they aim to extend this representation of the two triangle surfaces using normal vectors, which capture how the surfaces bend away from the tangent space:
Given the normal vectors $n_{p_i}$, $n_{i}$, $n_{i-1}$, and $n_{i+1}$ of the above points, respectively, we have:
\begin{equation}
    \begin{aligned}
        \alpha_1= \angle (\overrightarrow{n_{p_i}}, \overrightarrow{x_ip_i}), &\alpha_2= \angle (\overrightarrow{n_{p_i}}, \overrightarrow{x_{i-1}p_i}), \\
        \beta_1= \angle (\overrightarrow{n_i}, \overrightarrow{x_ip_i}), &\beta_2= \angle (\overrightarrow{n_i}, \overrightarrow{x_{i-1}x_i}), \\
        \theta_1= \angle (\overrightarrow{n_{i-1}}, \overrightarrow{x_{i-1}p_i}), &\theta_2= \angle (\overrightarrow{n_{i-1}}, \overrightarrow{x_{i-1}x_i}), \\
        \gamma_1= \angle (\overrightarrow{n_{i+1}}, \overrightarrow{x_{i+1}x_i}), &\gamma_2= \angle (\overrightarrow{n_{i+1}}, \overrightarrow{x_{i+1}p_i}).
    \end{aligned}
\end{equation}

Finally, the rotation-invariant PCSD (i.e., \textit{RISP}) can be expressed, explicitly:
\begin{equation}\label{eq:ripcsd}
\begin{aligned}
    \widetilde{p_{i}} = [ &L, \phi_1, \phi_2, \phi_3, \phi_4, \phi_5, \\
    &\alpha_1, \alpha_2, \beta_1, \beta_2, \theta_1, \theta_2, \gamma_1, \gamma_2 ] \in \mathbb{R}^{14}.
\end{aligned}
\end{equation}
\noindent where $L$ is the distance from reference $p_i$ to its neighbors $x_i$ and we know \textit{RISP} is an orientation-agnostic PCSD. We have,
\begin{theorem}[Rotation Invariance of t-FCW w/ \textit{RISP}]\label{theorem:ritfcwwrisp}
Given a point cloud $X_c \in \mathbb{R}^{3 \times n}$ in Cartesian coordinates, and let $R \in SO(3)$ be the rotation. Define a \textit{RISP} $S_{RI}: \mathbb{R}^{3 \times n} \rightarrow \mathbb{R}^{14 \times n}$, as introduced in Equation (\ref{eq:ripcsd}), such that
\begin{equation}
S_{RI}(RX_c) = S_{RI}(X_c).
\end{equation}
Then, the Gram matrix computed from the transformed representation is rotation-invariant:
\begin{equation}
    G(S_{RI}(RX_c)) = G(S_{RI}(X_c)).
\end{equation}
\end{theorem}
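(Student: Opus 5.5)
The plan is to treat this as a direct consequence of the hypothesis on \textit{RISP}. The Gram map $G(Y) = YY^T$ is a well-defined function of its argument $Y \in \mathbb{R}^{14 \times n}$. Any two equal inputs therefore produce equal outputs. No structure of $SO(3)$ is needed beyond the assumed identity $S_{RI}(RX_c) = S_{RI}(X_c)$.

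\textbf{Step 1: set up the two descriptor matrices.} Write $Y = S_{RI}(X_c)$ and $Y' = S_{RI}(RX_c)$, both in $\mathbb{R}^{14 \times n}$. By hypothesis $Y' = Y$ entrywise. This holds because each row of $Y$ is one of the scalar quantities $L, \phi_1,\dots,\phi_5, \alpha_1,\dots,\gamma_2$, evaluated at every point.

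\textbf{Step 2: apply the Gram map.} Compute
\begin{equation}
G(S_{RI}(RX_c)) = Y'Y'^T = YY^T = G(S_{RI}(X_c)).
\end{equation}
This contrasts with Theorem~\ref{theorem:rvgm}. There, the rotation acts on the dimension side, giving $RG(X_c)R^T$. Here the rotation has been absorbed before the Gram matrix is formed, so no conjugation term appears.

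\textbf{Step 3: justify the hypothesis and extend the conclusion.} For completeness, I would briefly justify why $S_{RI}(RX_c) = S_{RI}(X_c)$ is reasonable. Every entry of Equation~(\ref{eq:ripcsd}) is either a Euclidean length or an angle between vectors built from point differences, cross products, and normals. Rotations preserve inner products and norms. They also commute with cross products for $R \in SO(3)$, since $R(a\times b) = Ra \times Rb$. Normals are assumed to transform as $n \mapsto Rn$. Hence every angle and $L$ is unchanged.

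The main obstacle I expect is this justification, not the Gram step. Two issues need attention. Neighbor selection by $k$-NN and the ordering of $x_{i-1}, x_i, x_{i+1}$ must be rotation-equivariant; they are, since they depend only on distances. Sign and orientation conventions for normals must also transform consistently. Finally, permutation invariance from Theorem~\ref{pythagorean} together with Equation~(\ref{eq:tfcwG}) carries the conclusion from $G$ to the full t-FCW representation.
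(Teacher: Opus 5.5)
Your proof is correct and matches the paper. The paper gives no separate proof because, once $S_{RI}(RX_c)=S_{RI}(X_c)$ is assumed, the chain $G(S_{RI}(RX_c))=S_{RI}(RX_c)S_{RI}(RX_c)^T=S_{RI}(X_c)S_{RI}(X_c)^T=G(S_{RI}(X_c))$ is immediate, and that is exactly your Steps 1--2.

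Step 3 is not needed, since the theorem takes invariance of \textit{RISP} as a hypothesis, and two remarks in it are loose. First, the neighbor ordering $x_{i-1},x_i,x_{i+1}$ in \textit{RISP} is a cyclic (``adjacent'') ordering around $p_i$ that depends on angles in a local frame and on normal orientation, not on distances alone. Second, it is Equation~(\ref{eq:tfcwG}) alone, not permutation invariance, that carries the result from $G$ to t-FCW.
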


In this work, we employ three types of PCSDs: (1) the raw Cartesian coordinates (\textit{xyz}), which provide the most basic geometric representation; (2) the \textit{GeoPCSD} \cite{qiu2022geometric}, which enriches each point with additional geometric relations such as edge lengths and normal vectors, thereby enhancing robustness to noise; and (3) the \textit{RISP} \cite{zhang2024risurconv}, which encodes local triangular surface structures to ensure invariance under arbitrary 3D rotations. Each of these descriptors can be converted into a t-FCW graph, allowing the framework to directly inherit their respective properties (e.g., robustness from GeoPCSD and rotation invariance from RISP).

\subsubsection{Sampling}\label{sec:sampling}
Processing the full-resolution point cloud is computationally consuming due to its large data volume and memory requirements. Hence, PCSDs are typically sampled to reduce computational complexity, enabling efficient analysis without losing too much essential geometric information as possible. Following the design principles observed in Point-NN~\cite{zhang2023starting}, Seg-NN~\cite{zhu2024no}, and Point-FCW~\cite{lai2025point}, which assume that local neighborhoods provide reliable geometric information, we adopt similar strategies for sampling.

Given a point cloud $P$, we perform a sampling procedure consisting of FPS with 50\% sampling ratio, the computation of PCSDs denoted by $\mathcal{S}$, and $K$-NN ($K$-nearest neighbors) with $K$ neighbors.
Conceptually, the whole processing workflow is:
\begin{equation}
    \begin{aligned}\label{eq:fpsknn}
        P_{s} = FPS(P)&, P^{s}\in \mathbb{R}^{\frac{N}{2} \times 3},\\
        P^{\prime} = \mathcal{S}(P)&, P^{\prime} \in \mathbb{R}^{N \times C},  \\
        P_s^{\prime} = \mathcal{S}(P_{s})&, P_s^{\prime}\in \mathbb{R}^{\frac{N}{2} \times C}, \\
        P^{\prime\prime}_s = \text{$K$-NN}(P_s^{\prime}, P^{\prime})&, P^{\prime\prime}_s\in \mathbb{R}^{\frac{N}{2} \times K \times C}.\\
    \end{aligned}
\end{equation}

\subsubsection{Translating}

Lai~\etal~\cite{lai2025point} introduce the design of the t-FCW representation, which aims to translate an entire point cloud into a compact size. The completed implementation is provided in the Algorithm \ref{alg:tfcw}, where alpha \cite{tauzin2021giotto} denotes a scaling factor.
It presents a consistent size representation for point cloud processing, unlike other non-parametric algorithms \cite{zhang2023starting,zhu2024no} whose efficiency is strongly impacted as the dimension increases. However, it only focuses on global content, which means it cannot construct a memory bank for each point. As a result, it cannot do other tasks like part segmentation and semantic segmentation.

\begin{algorithm}[htbp]
\caption{\small Pytorch-style pseudocode of \textbf{t-FCW}}
\label{alg:tfcw}
\definecolor{codeblue}{rgb}{0.25,0.5,0.7}
\lstset{
  backgroundcolor=\color{white},
  basicstyle=\fontsize{7.6pt}{7.6pt}\ttfamily\selectfont,
  columns=fullflexible,
  breaklines=true,
  captionpos=b,
  commentstyle=\fontsize{10pt}{10pt}\color{codeblue},
  keywordstyle=\fontsize{10pt}{10pt},
}
\begin{lstlisting}[language=python]
# B:batch size;N,D,G,K:number of points,dimensions,grouping points,k-neighbors
b, n, k, d = surface.shape
surface = surface.permute(0, 3, 1, 2)
surface = pooling(surface) # [B, D, N]
x = pairwise_distance(surface, metric) # [B, D, D]
x[:, :-1, 1:] *= alpha
return x
    
\end{lstlisting}
\end{algorithm}

To address this limitation, we revisit the design of t-FCW to enable the construction of a t-FCW for each individual point, denoted as an empowered t-FCW. Rather than aggregating local information through pooling, which may discard useful geometric structure, we preserve the full local neighborhoods. This assumption is widely adopted in various existing methods \cite{zhang2023starting,zhu2024no,lai2025point} that treat local neighborhoods as reliable geometric information. As shown in Algorithm \ref{alg:etfcw}, we permute the PCSD matrix, which can be considered a collection of points with $K$ neighbors. It enables the translation of each point's neighbors as the t-FCW for maintaining all points.

\begin{algorithm}[ht]
\caption{\small Pytorch-style pseudocode of \textbf{empowered t-FCW}}
\label{alg:etfcw}
\definecolor{codeblue}{rgb}{0.25,0.5,0.7}
\lstset{
  backgroundcolor=\color{white},
  basicstyle=\fontsize{7.6pt}{7.6pt}\ttfamily\selectfont,
  columns=fullflexible,
  breaklines=true,
  captionpos=b,
  commentstyle=\fontsize{10pt}{10pt}\color{codeblue},
  keywordstyle=\fontsize{10pt}{10pt},
}
\begin{lstlisting}[language=python]
# B:batch size;N,D,G,K:number of poin
ts,dimensions,grouping points,k-neighbors
b, n, k, d = surface.shape
surface = surface.permute(0, 1, 3, 2)
surface = surface.reshape(-1, d, k) # [B*N, D, K]
std_dev = torch.std(surface, dim=-2, keepdim=True)+1e-5
surface /= std_dev
x = pairwise_distance(surface, metric) # [B*N, D, D]
x[:, :-1, 1:] *= alpha
return x.reshape(b, n, d, d)
    
\end{lstlisting}
\end{algorithm}

The comparison illustration for t-FCW and the empowered t-FCW is presented in Fig. \ref{fig:intro}.
It illustrates the difference between the original t-FCW and the proposed empowered t-FCW. The original t-FCW, where a single global pairwise distance matrix is derived from the entire point cloud, captures geometric structure but lacks point-level detail. In contrast, the empowered t-FCW at each point computes a local pairwise distance matrix based on its $K$ nearest neighbors.
This point-wise representation preserves local geometric relationships and enables carrying out tasks such as part segmentation and semantic segmentation.

\begin{figure}[ht]
\centering
\includegraphics[width=0.85\linewidth,trim=80 0 80 87, clip]{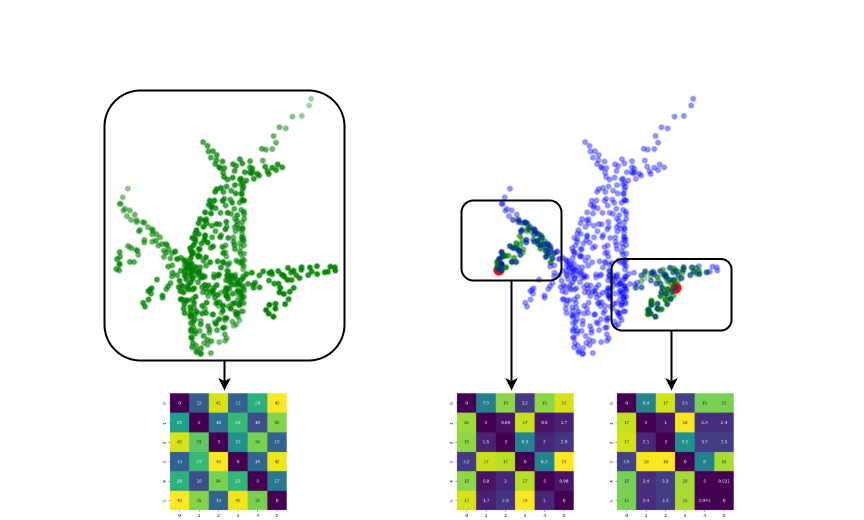}
\caption{Highlight the features of t-FCW (left) and empowered t-FCW (right). We use the $K$ nearest neighbors to extract similar semantic information, so that these engines of the airplane share a similar pattern.}
\label{fig:intro}
\vspace{-5pt}
\end{figure}

\vspace{-5pt}
\section{Experiments} \label{sec:exp}
As stated in Section \ref{sec:methods}, we evaluate t-FCW using three PCSDs: \textit{xyz} with $K$ nearest neighbor vectors $\in \mathbb{R}^{6}$; \textit{GeoPCSD} $\in \mathbb{R}^{14}$ \cite{qiu2022geometric} comprising coordinates, normal vectors, and edge information; and \textit{RISP} $\in \mathbb{R}^{14}$ \cite{zhang2024risurconv} comprising edge length and rotation-invariant angular features from local triangle surfaces. For benchmarking, the t-FCW framework is evaluated across three tasks: classification, part segmentation, and semantic segmentation. For generalization, we consider three aspects: robustness under rotation, data shuffling, and 3D adversarial attacks. We also conduct some ablation studies to discuss the impact of the diagonal elements for t-FCW, the impact of different modules in the t-FCW, and the impact of the selection of $K$ nearest neighbors for t-FCW. Finally, we conduct a simulation experiment for processing a growing-up point cloud using various networks.

\subsection{Experimental Settings}\label{sec:expsetting}
We report performance on four widely used benchmark datasets: ModelNet40 \cite{wu20153d} , ScanObjectNN \cite{uy2019revisiting}, ShapeNet-Part \cite{yi2016scalable}, and S3DIS \cite{armeni20163d}. We adopt the established data split principles for all datasets. Specifically, the predefined splits principles are provided by ModelNet40, ScanObjectNN, and ShapeNet-Part, while S3DIS is preprocessed according to the principles described in \cite{zhao2021few,zhu2024no}.
The t-FCW framework is compatible with various types of PCSDs. We consider the use of \textit{xyz} with $K$ nearest neighbors as the default PCSD, and it is denoted as \textbf{t-FCW} only. When \textit{GeoPCSD} or \textit{RISP} is used as the PCSD, we refer to the corresponding models as \textbf{t-FCW w/ \textit{GeoPCSD}} and \textbf{t-FCW w/ \textit{RISP}}, respectively.
The details of the t-FCW framework have been stated in the Section \ref{sec:framework}, while the framework is composed of four stages and the point clouds are downsampled by a ratio of 2 at each stage (e.g., a point cloud with 1024 points is reduced to 512 points at the first stage). 
For the four-stage architecture, it is widely used in point cloud analysis \cite{ma2022rethinking,qian2022pointnext,zhang2023starting,zhu2024no}, ensuring fair comparison with other methods. 
For each downsampled point, its $K$ nearest neighbors are computed. 
After sampling, the PCSDs are required to be translated into t-FCW, and it is noticed that there exists a scale factor alpha \cite{tauzin2021giotto} for scaling the dimension relationship.

\subsection{Benchmarks}\label{sec:benchmarks}
\subsubsection{Classification on ModelNet40} \label{sec:modelnet40}
Table \ref{tab:permn40} shows the performance in point cloud classification with ModelNet40. We used the pioneering neural network PointNet series \cite{qi2017pointnet,qi2017pointnet++} as the flagship and used the pioneering non-parametric network Point-NN as the baseline. Meanwhile, it also includes Seg-NN~\cite{zhu2024no} and ICP-Classifier~\cite{gou20253d}, which aim to achieve interpretability and robustness without learning parameters. 
Compared to these methods, the t-FCW achieves the best trade-off between accuracy and efficiency. Specifically, with only \textit{xyz} coordinates as input, t-FCW obtains an overall accuracy of 84.8\%, outperforming all other non-parametric baselines in speed. While Point-GN~\cite{mohammadi2025pointgn} achieves slightly higher accuracy than ours, we conduct more experiments warranting deeper analysis to reveal that Point-GN's performance is highly sensitive to training configurations as shown in Fig.~\ref{fig:batch_shuffle}.

\begin{table}[ht]
\caption{Performance on ModelNet40}\label{tab:permn40}
\centering
\resizebox{\linewidth}{!}{
\begin{tabular}{l|c|c|c} 
\hline
Methods             & Points & Overall acc. (\%)                            & Speed                   \\ 
\hline
PointNet \cite{qi2017pointnet}            & 1024     & 89.2                                     & \multirow{2}{*}{223 \cite{ma2022rethinking}}   \\
PointNet++ \cite{qi2017pointnet++}          & 1024     & 90.7                                     &                         \\
PointMLP \cite{ma2022rethinking} & 1024 & 92.7  &  218\\
\hline
Point-NN \cite{zhang2023starting}     & 1024     & 81.8                                     & 209                     \\
Seg-NN \cite{zhu2024no}             & 1024     & 84.2                                     & 306                     \\
Point-GN \cite{mohammadi2025pointgn}             & 1024     & 85.3                                     & 301                     \\
ICP-Classifier \cite{gou20253d}         & -        & \textbf{85.7} & -                       \\ 
t-FCW                & 1024     & 84.8                     & \uline{1727}                    \\
t-FCW w/ \textit{GeoPCSD} \cite{qiu2022geometric}       & 1024     & 82.9                                     & 660                     \\
\hline
PointMLP+t-FCW$^*$ & 1024 & 93.1 \textcolor{red}{($\uparrow 0.4$)} & 199 \textcolor{blue}{($\downarrow 8.7\%$)} \\
Point-NN+t-FCW$^*$ & 1024 & 85.3 \textcolor{red}{($\uparrow 3.5$)} & 163 \textcolor{blue}{($\downarrow 6.6\%$)}\\
\hline
\end{tabular}
}
\caption*{$^*$: We adopt Lai~\etal~\cite{lai2025point}'s fusion strategy to obtain results.}
\end{table}

Notably, t-FCW processes 1727 samples per second, which is more than 6 times faster than Point-NN and significantly faster than Seg-NN and Point-GN. These results highlight the highly computational efficiency of the t-FCW framework. Meanwhile, t-FCW can serve as a complementary feature to existing methods, offering promising performance gains at a low loss of inference speed.

\subsubsection{Classification on ScanObjectNN}\label{sec:expsonn}

Table \ref{tab:sonn} shows the performance on point cloud classification with ScanObjectNN, which includes three standard evaluation splits: OBJ-BG, OBJ-ONLY, and PB-T50-RS. 
In this benchmarking, we can understand the effectiveness of t-FCW because the simplicity of ModelNet40 makes it hard to distinguish the real performance of the models. In contrast, the ModelNet40 does not fully contain such properties compared to the ScanObjectNN: Noisy, Irregular shapes, and Arbitrary rotation, etc. 
Overall, the t-FCW still shows a small gap compared to the pioneering non-parametric networks. However, this trade-off is anticipated due to the properties of the t-FCW representation: by collapsing the point cloud into a global structure aligned along the feature dimension, local noise or outlier points may be integrated into the aggregated representation, potentially degrading performance. Despite the limitation, t-FCW still serves as a valuable complementary feature to existing methods.

\begin{table}[ht]
\centering
\caption{Overall Accuracy (\%) on ScanObjectNN}\label{tab:sonn}
\resizebox{\linewidth}{!}{
\begin{tabular}{l|ccc} 
\hline
Methods    & OBJ-BG & OBJ-ONLY & PB-T50-RS                                 \\ 
\hline
PointNet \cite{qi2017pointnet}  & 73.3   & 79.2     & 68.2  \\
PointNet++ \cite{qi2017pointnet++} & 82.3   & 84.3     & 77.9                                      \\
PointMLP$^{*\dagger}$ \cite{ma2022rethinking} & 88.6 & 86.4 & 81.5 \\
\hline
Point-NN \cite{zhang2023starting}  & 70.1   & 73.8     & 63.8                                      \\
Seg-NN \cite{zhu2024no}    & -      & -        & 64.4                                      \\
Point-GN \cite{mohammadi2025pointgn}  & \uline{85.2}   & \uline{86.0}     & \textbf{86.4}                                      \\ 
t-FCW (global) \cite{lai2025point} & 58.9 & - & 53.9 \\
Point-FCW \cite{lai2025point} & 61.5 & - & 57.6 \\
t-FCW      & 66.3   & 71.9     & 55.6                                      \\
t-FCW w/ \textit{GeoPCSD} \cite{qiu2022geometric}     & 69.0   & 72.0     & 57.8    \\
\hline
PointMLP+t-FCW$^{*\dagger}$ & \textbf{90.0} \textcolor{red}{($\uparrow 1.4$)} & \textbf{87.8} \textcolor{red}{($\uparrow 1.4$)} & \uline{82.9} \textcolor{red}{($\uparrow 1.4$)} \\
Point-NN+t-FCW$^{*\dagger}$ & 70.9 \textcolor{red}{($\uparrow 0.8$)} & 75.7 \textcolor{red}{($\uparrow 1.9$)} & 65.8 \textcolor{red}{($\uparrow 2.0$)}\\
\hline
\end{tabular}
}
\caption*{$^*$: We adopt Lai~\etal~\cite{lai2025point}'s fusion strategy to obtain results.\\
$^\dagger$: We used the pretrained model whose parameters were trained by PB-T50-RS.}
\end{table}

\begin{table*}[ht]
\centering
\caption{Few-shot Performance (\%) on S3DIS}\label{tab:semseg}
\begin{tabular}{l|ccc|ccc|ccc|ccc} 
\hline
\multirow{3}{*}{Methods} & \multicolumn{6}{c|}{Two-way}                                   & \multicolumn{6}{c}{Three-way}                                    \\ 
\cline{2-13}
                         & \multicolumn{3}{c|}{One-shot} & \multicolumn{3}{c|}{Five-shot} & \multicolumn{3}{c|}{One-shot} & \multicolumn{3}{c}{Five-shot}  \\
\cline{2-13}
                         & \textit{$S_0$} & \textit{$S_1$}& \textit{avg}                 & \textit{$S_0$} & \textit{$S_1$} & \textit{avg}                  & \textit{$S_0$} & \textit{$S_1$} & \textit{avg}                 & \textit{$S_0$} & \textit{$S_1$} & \textit{avg}                  \\ 
\hline
DGCNN \cite{wang2019dynamic}                   & 36.34 & 38.79 & 37.57         & 56.49 & 56.99 & 56.74          & 30.05 & 32.19 & 31.12         & 46.88 & 47.57 & 47.23          \\
ProtoNet \cite{satorras2018few}                & 48.39 & 49.98 & 49.19         & 57.34 & 63.22 & 60.28          & 40.81 & \textbf{45.07} & 42.94         & 49.05 & 53.42 & 51.24          \\ 
\hline
Point-NN \cite{zhang2023starting}                & 42.12 & 42.62 & 42.37         & 51.91 & 49.35 & 50.63          & 38.00 & 36.21 & 37.10         & 45.91 & 43.44 & 44.67          \\
Seg-NN \cite{zhu2024no}                  & 49.45 & 49.60 & 49.53         & 59.40 & 61.48 & 60.44          & 39.06 & 40.10 & 39.58         & 50.14 & 51.33 & 50.74          \\
t-FCW                    & 43.42 & 43.54 & 43.48         & 52.50 & 53.84 & 53.17          & 35.69 & 36.15 & 35.92         & 43.24 & 46.77 & 45.00          \\
t-FCW w/ \textit{GeoPCSD}            & 45.86 & 44.23 & 45.05         & 55.46 & 54.45 & 54.20          & 38.62 & 37.15 & 37.88         & 46.01 & 48.06 & 47.04          \\
Seg-NN + t-FCW & \textbf{52.66} & \textbf{54.26} & \textbf{53.46}         & \uline{61.85} & \uline{63.39} & \uline{62.62}          & \uline{42.32} & \uline{43.73} & \uline{43.03}         & \uline{51.94} & \textbf{54.48} & \uline{53.21}          \\
Seg-NN + t-FCW w/ \textit{GeoPCSD} & \uline{52.55} & \uline{53.49} & \uline{53.02}         & \textbf{62.42} & \textbf{63.53} & \textbf{62.98}          & \textbf{42.70} & 43.68 & \textbf{43.19}         & \textbf{52.15} & \uline{54.46} &  \textbf{53.31}         \\
\hline
\end{tabular}
\end{table*}

Our empowered t-FCW outperforms the naive t-FCW baseline from Lai~\etal~\cite{lai2025point}, and the t-FCW w/ \textit{GeoPCSD} can reach higher performance than that of Point-FCW \cite{lai2025point} without incorporating hybrid topological features via TDA.
Although t-FCW does not achieve the highest accuracy on the ScanObjectNN benchmark, it serves a distinct and valuable role by offering a unified and interpretable framework for evaluating PCSDs. Unlike previous works such as~\cite{ran2022surface,qiu2022geometric}, which propose new PCSDs in conjunction with specialized neural network architectures, our framework decouples the descriptor from the network design. This decoupling enables a straightforward comparison to PCSDs that isolate the effect of the neural networks.
For instance, when \textit{GeoPCSD} \cite{qiu2022geometric} are integrated into the t-FCW framework, they modestly improve performance across all three ScanObjectNN splits (Table~\ref{tab:sonn}). Specifically, the Geometric PCSD helps to increase accuracy on the OBJ-BG split from 66.3\% to 69.0\% and on PB-T50-RS from 55.6\% to 57.8\%. Based on a consistent platform to be fairly evaluated, combining these improvements and the research of Qiu~\etal \cite{qiu2022geometric} suggests that \textit{GeoPCSD} indeed contributes meaningful priors, particularly under noise.
Therefore, rather than competing purely on absolute performance, the t-FCW framework provides a reliable and PCSDs-architecture-independent platform to examine the utility and generalizability of various PCSDs across tasks and datasets.

\subsubsection{Semantic Segmentation}
We evaluate the t-FCW framework for few-shot semantic segmentation \cite{zhao2021few,PAPFZS3D,zhu2024no} on the S3DIS, which segments novel semantic categories from only a few labeled examples per class.
Table \ref{tab:semseg} presents the results under two-way and three-way segmentation with both one-shot and five-shot settings. We compare our models (t-FCW and t-FCW w/ \textit{GeoPCSD}) against state-of-the-art parametric and non-parametric baselines, including DGCNN, ProtoNet, Point-NN, and Seg-NN.
Overall, Seg-NN achieves the highest mIoU, and t-FCW achieves competitive results across all splits, outperforming Point-NN in nearly every setting and approaching the performance of Seg-NN. 
Moreover, we observe that t-FCW w/ \textit{GeoPCSD} consistently improves segmentation performance in both one-shot and five-shot settings. 
These results also demonstrate that the geometric attributes from \textit{GeoPCSD} provide complementary structural information that enhances the segmentation capability of t-FCW, and continue to support the discussion in the Section \ref{sec:expsonn}.
Beyond the standalone performance of t-FCW, we further investigate its role as a complementary module by integrating it into the Seg-NN framework. As reported in the Table \ref{tab:semseg}, Seg-NN+t-FCW and Seg-NN+t-FCW w/ \textit{GeoPCSD} both deliver consistent improvements over the vanilla Seg-NN across all few-shot settings. These improvements confirm that t-FCW provides further complementary benefits. The results from the Table \ref{tab:permn40}, Table \ref{tab:sonn}, and Table \ref{tab:semseg} strongly support the fact that t-FCW not only serves as an interpretable non-parametric representation but also functions as a plug-in module to enhance existing state-of-the-art networks.

\subsubsection{Parts Segmentation}
We follow the standard split of ShapeNet-Part, which includes 16 object categories and 50 annotated part labels, and evaluate performance using the mean Intersection over Union (mIoU) per shape. As shown in Table \ref{tab:partseg}, the t-FCW framework achieves the same performance (70.4\%), matching the overall mIoU of Point-NN (70.4\%) while offering significantly higher inference speed (80 samples/second) with one sample per batch. 
Furthermore, while processing 128 samples per batch, the speed of t-FCW improves dramatically to 333 samples/second, and its performance is better than Point-NN (70.5\%). 
Even with the model of t-FCW w/ \textit{GeoPCSD}, it still maintains a competitive speed of 178 samples/second.
We notice that the \textit{GeoPCSD} does not improve part segmentation performance, but it does improve semantic segmentation performance. We present an empirical study for this in the section. \ref{sec:cor}.  

\begin{table}[ht]
\centering
\caption{Overall Performance (\%) on ShapeNet-Part}\label{tab:partseg}
\begin{tabular}{l|c|c|c} 
\hline
Methods & Batch size & Overall mIoU & Speed                  \\ 
\hline
PointNet \cite{qi2017pointnet}   & -             & 83.7    & - \\
PointNet++ \cite{qi2017pointnet++}   & -             & 85.1  & 45   \\ 
\hline
Point-NN$^*$ \cite{zhang2023starting}   & 1             & \uline{70.4}  &55   \\
t-FCW$^*$          &1          & \uline{70.4}  &80 \\
t-FCW             & 128       & \textbf{70.5}      & \textbf{333}     \\
t-FCW w/ \textit{GeoPCSD}    & 128    & 66.8  & \uline{178} \\ 
\hline
\end{tabular}
\end{table}

\begin{table}[ht]
\centering
\caption{Performance on ModelNet40 w/ Rotations}\label{tab:rotation}
\resizebox{\linewidth}{!}{
\begin{tabular}{l|c|ccc} 
\hline
\multirow{2}{*}{Methods} & \multirow{2}{*}{Baseline$^*$} & \multicolumn{3}{c}{Rotation}  \\ 
\cline{3-5}
                         &                           & $z/z$  & $z/SO(3)$ & $SO(3)/SO(3)$  \\ 
\hline
PointNet \cite{qi2017pointnet}                 & 89.2                      & 89.2 & 16.4    & 75.5         \\
PointNet++ \cite{qi2017pointnet++}              & 90.7                      & 91.8 & 18.4    & 77.4         \\
DGCNN \cite{wang2019dynamic}                    & 92.9                      & 92.2 & 20.6    & 81.1         \\ 
\hline
Point-NN \cite{zhang2023starting}                & 81.8                      & 69.4 & 19.9    & 39.4         \\
Point-GN \cite{mohammadi2025pointgn}                & 85.3                      & 71.9 & 20.5    & 39.4         \\
ICP-Classifier \cite{gou20253d}          & 85.7                      & -    & 73.7    & -            \\ 
\hline
t-FCW                    & 84.8                      & 69.5 & 20.4    & 38.2         \\
t-FCW w/ \textit{GeoPCSD} \cite{qiu2022geometric}      & 82.9                      & 71.0 & 25.9    & 43.2         \\
t-FCW w/ \textit{RISP} \cite{zhang2024risurconv}       & 73.9                      & \textbf{73.9} & \textbf{73.9}    & \textbf{73.9}         \\
\hline
\end{tabular}
}
\caption*{$^*$: We define the baseline performance with no rotation applied to the ModelNet40.}
\end{table}

\subsection{Robustness}
\subsubsection{Rotation} \label{sec:rotation}

We report the robustness performance using ModelNet40 across three rotation scenarios, i.e., $z/z$ is trained and tested with z-axis rotations, $z/SO(3)$ is trained with z-rotations and tested with $SO(3)$ rotations, and $SO(3)/SO(3)$ is trained and tested under $SO(3)$ rotation. As shown in Table \ref{tab:rotation}, the pioneering deep learning networks and non-parametric methods demonstrate significant performance degradation under three rotation scenarios, which means poor robustness under rotation. Meanwhile, the baseline of t-FCW w/ \textit{RISP} is comparatively lower than other baselines, achieving only 73.9\% overall accuracy. However, this approach suffers no performance degradation, consistently achieving 73.9\% accuracy across all rotation settings.
As established in Theorem \ref{theorem:rvgm}, the standard Gram matrix computed from 3D Cartesian coordinates is not invariant under rotation. To address this problem, we can introduce a PCSD (as established in Theorem \ref{theorem:ritfcwwrisp}) that enforces rotational invariance before computing the Gram matrix to improve the robustness under rotation.

\subsubsection{Effect of Batch Size and Data Shuffling}\label{sec:ebsds}

While Table~\ref{tab:permn40} shows Point-GN achieving higher accuracy than t-FCW on ModelNet40 (85.3\% vs 84.8\%) and much higher accuracy than t-FCW on ScanObjectNN (86.4\% vs 57.8\%), reported accuracies from single configurations may not reflect model reliability. Feature privacy is crucial for non-parametric frameworks using memory banks, to prevent leaking feature information that could compromise fairness or cause overfitting via implicit memorization. We evaluate stability across batch sizes and data ordering to provide a comprehensive comparison.
We evaluate t-FCW, Point-NN, and Point-GN using a shared non-parametric classifier (Equation (\ref{eq:tfeq})) across batch sizes (1, 2, 4, 8, 16, 32) with and without data shuffling. Data shuffling is a standard practice in deep learning to prevent overfitting and ensure generalization. We test on both ScanObjectNN (PB-T50-RS) and ModelNet40. 

Fig.~\ref{fig:batch_shuffle} compares accuracy across models under varying batch sizes and shuffling. t-FCW shows high stability, with consistent accuracy minimally affected by shuffling. For example, Point-NN improves with larger batches without shuffling (up to 63.39\% at 32), but degrades sharply with shuffling, indicating reliance on batch correlations. 
Point-GN exhibits high reliability on model configuration dependence. It peaks at 86.49\% at batch 16 without shuffling but collapses with shuffling, revealing dependence on sample order; it also oscillates at small batches (1-4) on the ScanObjectNN. On ModelNet40, the instability is even more extreme: accuracy ranges from 3.36\% to 85.53\% across configurations. However, it stops the disaster failures when the normalization is applied, which indicates a critical dependence on batch-wise information.

These results present new insights into the comparison in Table~\ref{tab:permn40}, which shows that Point-GN's 85.3\% accuracy on ModelNet40 represents peak performance under optimal conditions (batch=32, no shuffle). In contrast, t-FCW's 84.8\% accuracy represents consistent, reliable performance across all configurations. Our analysis reveals that achieving true "non-parametric" methods requires not only avoiding learnable parameters but also avoiding implicit dependencies on batch-wise information (i.e., feature privacy).

\begin{figure*}[ht]
    \centering
    \includegraphics[width=0.78\linewidth]{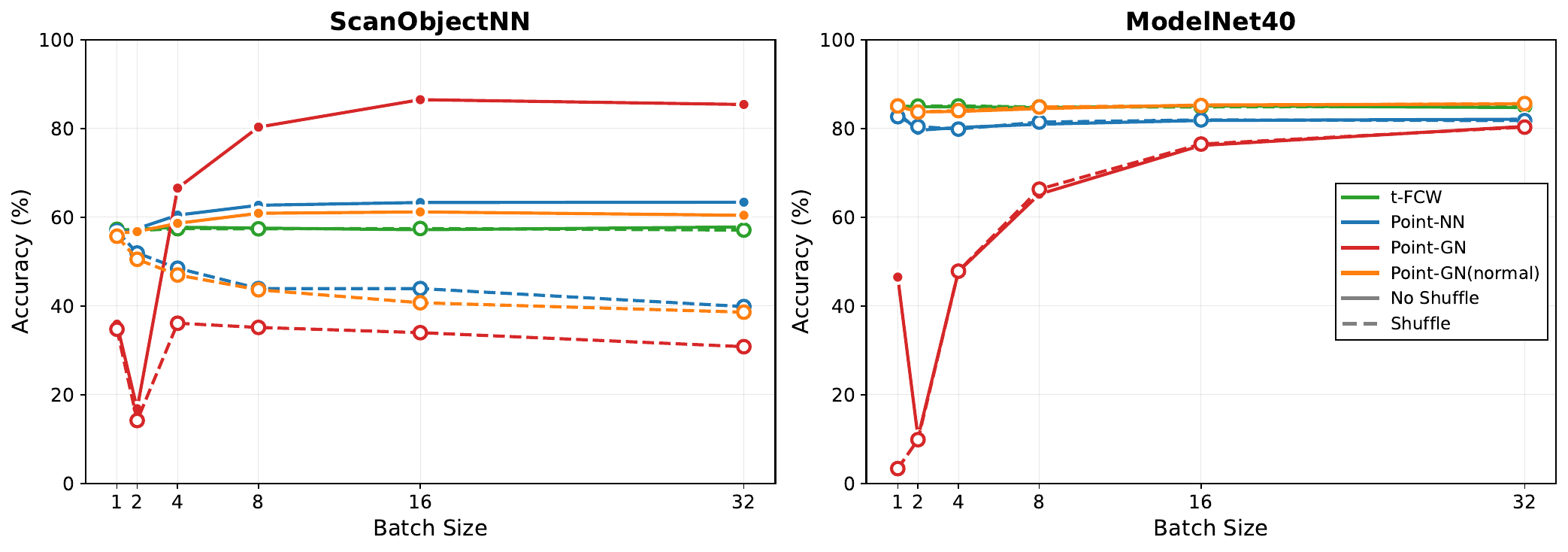}
    \caption{Stability analysis across batch sizes and data shuffling.}
    \label{fig:batch_shuffle}
\end{figure*}

\subsubsection{3D Adversarial Attack}
3D Adversarial Attacks aim to mislead models by introducing subtle perturbations to the 3D point cloud, resulting in incorrect predictions or decisions. To assess model robustness against such attacks, we conduct evaluations on the ModelNet40 dataset using the adversarial benchmark proposed by Xiang~\etal~\cite{xiang2019generating}, which includes adversarial examples for the 10 largest object classes in ModelNet40. The evaluation procedure is as follows: each model first performs inference on the original (clean) objects; we then identify and retain only those samples that are correctly classified. These correctly predicted samples are subsequently transformed into adversarial samples and used to attack the models. The robustness is quantified by the number of attack success samples, as presented in Table~\ref{tab:adv_attack}, whose attack types also include Adversarial Clusters, Adversarial Objects, and Adversarial Point Perturbations based on the definition which is proposed by Xiang~\etal~\cite {xiang2019generating}.

\begin{table}[ht]
\centering
\caption{Adversarial Attack on ModelNet40}\label{tab:adv_attack}
\resizebox{\linewidth}{!}{
\begin{tabular}{l|c|ccc} 
\hline
\multirow{2}{*}{Methods} & \multirow{2}{*}{Samples} & \multicolumn{3}{c}{Attack Success Samples}  \\ 
\cline{3-5}
                         &                          & Clusters & Objects & Perturbations          \\ 
\hline
PointMLP \cite{ma2022rethinking}                & \textbf{666}                      & 5        & 23      & 221                    \\
Point-NN  \cite{zhang2023starting}                & 492                      & 5        & 15      & 38                     \\
Point-GN \cite{mohammadi2025pointgn}                 & 440                      & \textbf{4}        & 24      & 49                     \\ 
\hline
t-FCW                    & 486                      & 5        & 19      & \textbf{22}                     \\
t-FCW w/ Geo PCSD \cite{qiu2022geometric}      & 549                      & \textbf{4}        & \textbf{13}      & 36                     \\
\hline
\end{tabular}
}
\end{table}

Overall, most of the listed models are able to defend against the attack by Adversarial Clusters and Adversarial Objects. However, clear differences in robustness emerge when considering Adversarial Point Perturbations. PointMLP shows the poorest performance, which is fooled by 221 perturbation samples successfully, even though it achieves the highest number of correct predictions on clean samples. We assume that PointMLP is the only model optimized by the gradient. 
In contrast, other non-parametric models demonstrate strong robustness against perturbations. Among all evaluated models, t-FCW and t-FCW w/ \textit{GeoPCSD} exhibit the highest robustness, with only 22 and 36 successful adversarial perturbation attacks from 486 and 549 adversarial samples, respectively, making them the most robust models in this category. Similarly, Point-NN and Point-GN also perform better than PointMLP in Adversarial Point Perturbations, though they lag behind t-FCW and t-FCW w/ \textit{GeoPCSD}.
These results highlight the effectiveness of the t-FCW framework design in robustness to adversarial point-level attacks, making it a promising candidate for applications requiring both interpretability and stability from adversarial attacks.

\subsubsection{Corruption}\label{sec:cor}

To diagnose the performance gap of t-FCW on ScanObjectNN and ground the GeoPCSD drop in part segmentation, we evaluate two corruptions~\cite{ren2022modelnetc}: jitter and global noise with severity levels (S1--S5) on ModelNet40 and ShapeNet-Part. 
As shown in Table~\ref{tab:corruption_model}, Algorithms \ref{alg:tfcw} and \ref{alg:etfcw} (for global and local features, respectively) are robust to jitter noise, where Algorithm \ref{alg:etfcw} performs better. However, under global noise, the pattern reverses: Algorithm \ref{alg:tfcw} outperforms Algorithm \ref{alg:etfcw}. This exposes a vulnerability of local neighborhood aggregation that, when scene-level outliers contaminate $k$-NN groupings, they disrupt the neighborhood selection process and degrade the discriminative inter-dimensional correlations. This directly mirrors the challenge posed by PB-T50-RS of ScanObjectNN, where background clutter acts as globally distributed outliers to degrade t-FCW performance. Therefore, we utilize the United block as shown in Fig.~\ref{fig:tfcw_block}, which offers the best trade-off by combining Algorithms \ref{alg:tfcw} and \ref{alg:etfcw}.
Furthermore, Tables~\ref{tab:corruption_model} and~\ref{tab:corruption_shape} together demonstrate why GeoPCSD is more robust to xyz under corruption, although Table~\ref{tab:partseg} shows that t-FCW performs better than t-FCW w/ GeoPCSD on clean data. This results in a precision-robustness trade-off: the vector-based information in GeoPCSD provides stable representations under noise by maintaining geometric relationships even under perturbations, while this information introduces redundancy on clean data. In contrast, xyz captures a compact representation with lower robustness, but generally performs better on clean datasets (e.g., ModelNet40 and ShapeNet-Part).

\begin{table}[!t]
  \renewcommand{\arraystretch}{1.1}
  \caption{ModelNet40 w/ Corruption (Acc. \%)}
  \label{tab:corruption_model}
  \centering
  \scriptsize
  \setlength{\tabcolsep}{3.2pt}
  \begin{tabular}{c|c|c|cccccc}
    \toprule
    \textbf{PCSD} & \textbf{t-FCW} & \textbf{Corruption} & \textbf{S1} & \textbf{S2} & \textbf{S3} & \textbf{S4} & \textbf{S5} & \textbf{Drop$\downarrow$} \\
    \midrule
    \multirow{6}{*}{\textit{GeoPCSD}}
      & Alg. \ref{alg:tfcw}  &     & 78.24 & 72.41 & 62.84 & 50.61 & 42.06 & 36.75 \\
      & Alg. \ref{alg:etfcw}  & Jitter   & 80.23 & 79.13 & 75.97 & 68.03 & 59.76 & \textbf{21.64} \\
      & Alg. \ref{alg:tfcw} \& \ref{alg:etfcw} &     & 82.78 & 80.79 & 76.42 & 67.26 & 59.56 & 23.54 \\
      \cline{2-9}
      & Alg. \ref{alg:tfcw}  &    & 73.38 & 66.25 & 60.41 & 56.00 & 52.03 & \textbf{26.78} \\
      & Alg. \ref{alg:etfcw}   & Global  & 54.58 & 40.96 & 31.20 & 26.38 & 22.08 & 59.32 \\
      & Alg. \ref{alg:tfcw} \& \ref{alg:etfcw}  &    & 68.68 & 57.01 & 46.84 & 38.82 & 33.02 & 50.08 \\
    \midrule
    \multirow{6}{*}{\textit{xyz}}
      &  Alg. \ref{alg:tfcw}  &    & 80.27 & 75.41 & 64.51 & 53.93 & 46.07 & 35.33 \\
      & Alg. \ref{alg:etfcw}   & Jitter   & 82.70 & 82.05 & 78.48 & 73.14 & 66.65 & \textbf{16.73} \\
      & Alg. \ref{alg:tfcw} \& \ref{alg:etfcw}  &    & 84.40 & 82.54 & 78.28 & 69.69 & 60.90 & 23.82 \\
      \cline{2-9}
      &  Alg. \ref{alg:tfcw}  &   & 73.50 & 66.17 & 58.51 & 51.26 & 47.16 & \textbf{34.24} \\
      & Alg. \ref{alg:etfcw}   & Global  & 44.69 & 33.43 & 23.58 & 20.14 & 17.42 & 65.96 \\
      & Alg. \ref{alg:tfcw} \& \ref{alg:etfcw}  &   & 58.02 & 45.18 & 36.51 & 29.70 & 24.84 & 59.89 \\
    \midrule
  \end{tabular}
\end{table}

\begin{table}[!t]
  \caption{ShapeNet-Part w/ Corruption (mIoU \%)}
  \label{tab:corruption_shape}
  \centering
  \scriptsize
  \setlength{\tabcolsep}{5pt}
  \begin{tabular}{c|c|cccccc}
    \toprule
    \textbf{PCSD}  & \textbf{Corruption} & \textbf{S1} & \textbf{S2} & \textbf{S3} & \textbf{S4} & \textbf{S5} & \textbf{Drop$\downarrow$} \\
    \midrule
    \multirow{2}{*}{\textit{GeoPCSD}}
       & Jitter   & 66.91 & 65.97 & 64.53 & 62.90 & 60.58 & \textbf{7.62}  \\
       & Global  & 66.24 & 65.38 & 64.76 & 64.05 & 63.50 & \textbf{4.70}  \\
    \midrule
    \multirow{2}{*}{\textit{xyz}}
       & Jitter   & 62.49 & 61.74 & 60.83 & 59.34 & 57.57 & 12.62 \\
      & Global  & 61.63 & 60.67 & 59.74 & 59.02 & 58.20 & 11.98 \\
    \bottomrule
    \hline
  \end{tabular}
\end{table}

\subsection{Ablation}\label{sec:ablation}
\subsubsection{Diagonal Elements}
As stated in the Section \ref{sec:interpretability}, it is noticed that the formulation of the t-FCW can be represented by a Gram matrix.
To further investigate the impact of the diagonal elements in the t‑FCW formulation, we conduct an ablation study by modifying or removing the components of the formulation. Table~\ref{tab:ablation_diag} reports the overall classification accuracy on ModelNet40 for point cloud classification. The original t‑FCW formulation in Equation (\ref{eq:tfcwG}), which excludes diagonal (self-correlation) elements, achieves the highest accuracy (84.8\%). In contrast, when diagonal components are retained, either explicitly or through inclusion of the full Gram matrix, the performance consistently degrades.
Critically, simply removing diagonals via $G - \text{diag}(G)$ yields only 81.3\% accuracy, which is identical to using diagonal elements alone and 3.5\% lower than t-FCW.
Additionally, using the Gram matrix directly also results in suboptimal performance compared to t-FCW, reinforcing the hypothesis that the diagonal elements contribute negatively. These results validate the design of excluding self-correlations in t‑FCW to enhance discriminability.

\begin{table}[ht]
\caption{Ablation Study on Diagonal elements}\label{tab:ablation_diag}
\centering
\begin{tabular}{l|c} 
\hline
Formulation & Overall Acc. (\%) \\ 
\hline
$\sqrt{diag(G)1^{T}_{n}+1_{n}diag(G)^{T}-2G}$       &    \textbf{84.8}      \\
$\sqrt{diag(G)1^{T}_{n}+1_{n}diag(G)^{T}-1G}$       &    82.5               \\
$\sqrt{diag(G)1^{T}_{n}+1_{n}diag(G)^{T}}$       &       81.3               \\
$G$        &     82.6 \\
$G-diag(G)$       &       81.3               \\
\hline
\end{tabular}
\end{table}

\subsubsection{Normalization}
Point clouds are naturally irregular, unevenly scattered, and differently dense, and formatted with absolute coordinates that are shifted under various scenarios. Standard sampling and grouping strategies (i.e., FPS and $k$-NN) preserve these intrinsic properties, so some existing models \cite{ma2022rethinking,zhang2023starting,mohammadi2025pointgn} employ normalization to regularize local geometric distributions. Following this principle, we introduce normalization into the Algorithm \ref{alg:etfcw}, and we report that there exists a performance loss without applying normalization, as shown in Table \ref{tab:ion}.

\begin{table}[ht]
\caption{Normalization for ModelNet40 Accuracy (\%)}\label{tab:ion}
\centering
\begin{tabular}{l|l|l} 
\hline
Methods  & w/ normalization & w/o normalization  \\ 
\hline
PointMLP \cite{ma2022rethinking} & 92.9             & 33.6 \textcolor{blue}{($\downarrow 59.3\%$)}              \\
Point-NN \cite{zhang2023starting} & 81.8             & 76.2 \textcolor{blue}{($\downarrow 5.6\%$)}               \\
Point-GN \cite{mohammadi2025pointgn} & 85.3             & 76.1 \textcolor{blue}{($\downarrow 9.2\%$)}               \\ 
t-FCW (our)    & 84.8             & 80.4 \textcolor{blue}{($\downarrow 4.0\%$)}       \\
\hline
\end{tabular}
\end{table}

\subsubsection{Sensitivity to the K value}
As discussed in Section~\ref{sec:sampling}, several studies~\cite{zhang2023starting,zhu2024no,lai2025point} assume that local neighborhoods provide reliable geometric information.
Our framework, which also follows the assumption, is sensitive to the $K$-nearest neighbors for performance, so we empirically evaluate the impact given by $K$. We normalize performance to highlight minor variations for a facilitated comparison. As illustrated in Fig.~\ref{fig:np}, we recommend selecting $K$-nearest neighbors across different datasets by highlighting star marks.

\begin{figure}[ht]
    \centering
    \includegraphics[width=0.85\linewidth,trim=0 10 0 0, clip]{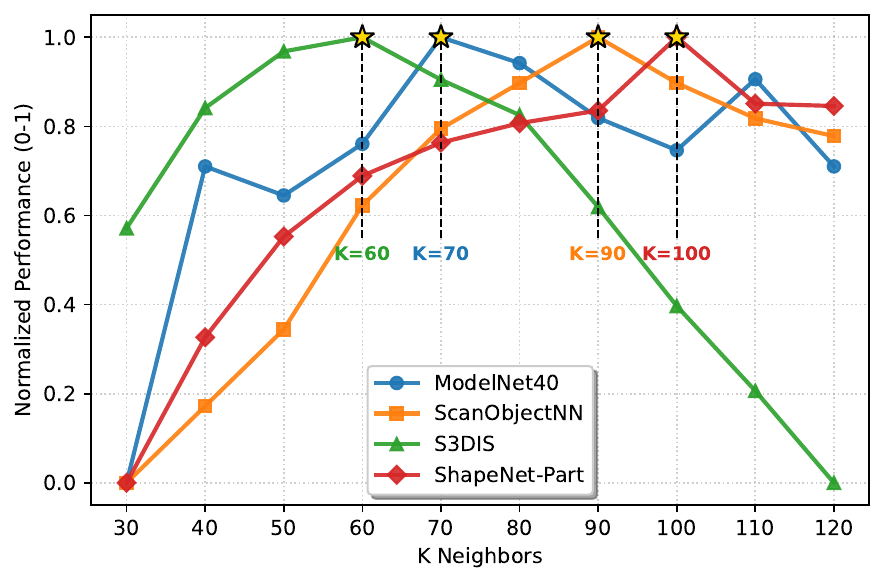}
    \caption{Normalized vs. $K$-Nearest Neighbors}
    \label{fig:np}
\end{figure}

\subsection{Complexity Analysis}
\subsubsection{Computational Complexity}
We analyze the theoretical complexity of t-FCW compared to existing non-parametric feature extractors to demonstrate its scalability advantages. Given a point cloud $P \in \mathbb{R}^{N \times 3}$, the existing methods express below computational complexity:
\begin{itemize}
    \item \textbf{t-FCW \cite{lai2025point}: }
    It translates the PCSD ($\mathcal{S} \in \mathbb{R}^{N \times C}$) into a compact size representation ($\mathbb{R}^{C \times C}$) with complexity $\mathcal{O}(C^2N)$ where $N \gg C$. Its output shape is $C^2$.
    \item \textbf{Empowered t-FCW (Ours): }
    It translates the PCSD ($\mathcal{S} \in \mathbb{R}^{N \times K \times C}$) with $K$ nearest neighbors by a complexity $\mathcal{O}(C^2NK)$, similarly where $N \gg K \gg C$. Its output shape is $N \times C^2$.
    \item \textbf{Point-NN \cite{zhang2023starting}: }It processes the embedded point cloud ($F \in \mathbb{R}^{N \times K \times D_1}$) with $K$ nearest neighbors using trigonometric functions with complexity $\mathcal{O}(D_1NK)$. Its output shape is $N \times D_1$.
    \item \textbf{PointMLP \cite{ma2022rethinking}: }It processes the embedded point cloud $F \in \mathbb{R}^{N \times K \times D_2}$ with $K$ nearest neighbors using MLP functions with complexity $\mathcal{O}(D_2^2NK)$. Its output shape is $N \times D_2$.
\end{itemize}
Since $D \gg C^2$ typically (i.e., $C\in[6, 14]$, $D_1\in [72,1152]$, $D_2\in [64,1024]$), we have $C^2NK < DNK \ll D^2NK$. Our method balances efficiency and expressiveness, introducing $K$ nearest neighbors overhead over t-FCW \cite{lai2025point} while remaining more efficient than Point-NN and PointMLP.

\subsubsection{Volume Scalability}
We evaluate various frameworks on large-volume point clouds by progressively increasing the number of points until a "CUDA out of memory" (OOM) error occurs. 
A random point cloud containing 1024 points is initially generated and fed into the model. If no error occurs (i.e., CUDA out of memory), the volume is enlarged by 1024 points in the next step. This process continues iteratively until the GPU memory is exhausted.

We used an RTX A5000 with 24GB to carry out this experiment. As shown in Fig. \ref{fig:cmuudsv}, the t-FCW demonstrates superior efficiency, processing up to 77,824 points before exhaustion. In contrast, existing frameworks reach their memory limits much earlier: Point-NN can process volume at 38,912 points, PointMLP at 35,840 points, and PointMLP (elite) at 56,320 points, and they fail at the next step (highlighted with an error break). Fig. \ref{fig:cmuudsv} shows that the t-FCW grows significantly more slowly than the competitive methods \cite{zhang2023starting,ma2022rethinking} with the same memory limitation that supports its robustness for large-scale point cloud processing.

\begin{figure}[ht]
    \centering
    \includegraphics[width=\linewidth,trim=0 20 0 0, clip]{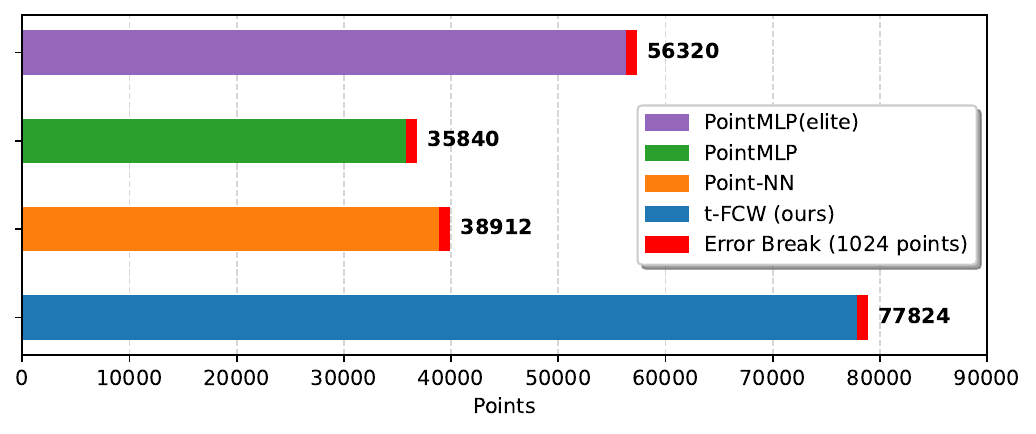}
    \caption{Max processing points' volume of various methods under the same memory limitation.}
    \label{fig:cmuudsv}
\end{figure}

\section{Conclusion}\label{sec:conclu}
We have presented a non-parametric and interpretable t-FCW framework for point cloud analysis. Extending it to segmentation and integrating diverse PCSDs yields strong efficiency, robustness to rotations, randomness, and 3D adversarial attacks, while preserving interpretability in classification and segmentation across four benchmarks. 
Our experiments reveal an important design principle that descriptor complexity should match data characteristics. The t-FCW performs well on clean datasets, while t-FCW w/ \textit{GeoPCSD} improves performance on noisy, real-world data by leveraging extra geometric descriptors.
Interpretability is demonstrated through a mathematical formulation revealing strong inter-dimensional correlations in shapes, and visualizations showing that similar semantic regions share t-FCW patterns, unlike dissimilar ones. Overall, t-FCW offers a compact and expressive representation, serving as a standalone solution or a complementary module for 3D point cloud processing.
In future work, we plan to utilize the empowered t-FCW graph representation to real-world scenarios, where it may serve as a complementary module within SLAM systems.

\bibliographystyle{IEEEtran}
\bibliography{ref.bib}

\end{document}